\newtheorem{theorem}{Theorem}
\newtheorem{lemma}{Lemma}
\newtheorem{corollary}{Corollary}
\newtheorem{definition}{Definition}
\newtheorem{remark}{Remark}
\newcommand{\E}{\mathbb{E}}
\newcommand{\ba}{\mathbf{a}}
\newcommand{\be}{\mathbf{e}}
\newcommand{\bx}{\mathbf{x}}
\newcommand{\bw}{\mathbf{w}}
\newcommand{\bu}{\mathbf{u}}
\newcommand{\bv}{\mathbf{v}}
\newcommand{\bz}{\mathbf{z}}
\newcommand{\bd}{\mathbf{d}}
\newcommand{\by}{\mathbf{y}}
\newcommand{\br}{\mathbf{r}}
\newcommand{\Ocal}{\mathcal{O}}
\newcommand{\Xcal}{\mathcal{X}}
\newcommand{\Wcal}{\mathcal{W}}
\newcommand{\norm}[1]{\|#1\|}
\newtheorem{example}{Example}
\newtheorem{assumption}{Assumption}
\newcommand{\secref}[1]{Sec.~\ref{#1}}
\renewcommand{\eqref}[1]{Eq.~(\ref{#1})}
\newcommand{\lemref}[1]{Lemma~\ref{#1}}
\newcommand{\thmref}[1]{Thm.~\ref{#1}}
\title{Are ResNets Provably Better than Linear Predictors?}
\author{Ohad Shamir\\Weizmann Institute of 
	Science and Microsoft Research}
\date{}
\begin{document}
	
\maketitle

\begin{abstract}
A residual network (or ResNet) is a standard deep neural net architecture, 
with state-of-the-art performance across numerous applications. The main 
premise of ResNets is that they allow the training of each layer to focus on 
fitting just the residual of the previous layer's output and the target output. 
Thus, we should expect that the trained network is no worse than what we can 
obtain if we remove the residual layers and train a shallower network instead. 
However, due to the non-convexity of the optimization problem, it is not at all 
clear that ResNets indeed achieve this behavior, rather than getting stuck at 
some arbitrarily poor local minimum. In this paper, we rigorously prove that 
arbitrarily deep, nonlinear residual units indeed exhibit this behavior, in the 
sense that the optimization landscape contains no local minima with value above 
what can be obtained with a linear predictor (namely a 1-layer network). 
Notably, we show this under minimal or no assumptions on the precise 
network architecture, data distribution, or loss function used. We also provide 
a quantitative analysis of approximate stationary points for this 
problem. Finally, we show that with a certain tweak to the architecture, 
training the network with standard stochastic gradient descent achieves an 
objective value close or better than any linear predictor.
\end{abstract}

\section{Introduction}

Residual networks (or ResNets) are a popular class 
of artificial neural networks, providing state-of-the-art performance across 
numerous applications
\citep{he2016deep,he2016identity,kim2016accurate,xie2017aggregated,xiong2017microsoft}.
Unlike vanilla feedforward neural networks, ResNets are characterized by skip 
connections, in which the output of one layer is directly added to the output 
of some following layer. Mathematically, whereas feedforward neural networks 
can be expressed as stacking layers of the form
\[
\by = g_{\Phi}(\bx)~,
\]
(where $(\bx,\by)$ is the input-output pair and $\Phi$ are the tunable  
parameters of the function $g_{\Phi}$), ResNets are built from 
``residual units'' of 
the form $\by = f\left(h(\bx)+g_{\Phi}(\bx)\right)$, where $f,h$ are fixed 
functions. In fact, it is common to let $f,h$ be the 
identity \citep{he2016identity}, in which case each unit takes the form
\begin{equation}\label{eq:resnet}
\by = \bx+g_{\Phi}(\bx)~.
\end{equation}
Intuitively, this means that in each layer, the training of $f_{\Phi}$ can 
focus on fitting just the ``residual'' of the target $\by$ given 
$\bx$, rather than $\by$ itself. In particular, adding more depth should not 
harm performance, since we can effectively eliminate layers by tuning 
$\Phi$ such that $g_{\Phi}$ is the zero function. Due to this property, 
residual networks have proven to 
be very effective in training extremely deep networks, with hundreds of layers 
or more. 

Despite their widespread empirical success, our rigorous theoretical 
understanding of training residual networks is very limited. Most recent 
theoretical works on optimization in deep learning (e.g. 
\citet{soltanolkotabi2017theoretical,yun2018critical,soudry2017exponentially,
	brutzkus2017sgd,ge2017learning,safran2017spurious,du2018power} to name just 
	a few examples) have focused 
on simpler, feedforward architectures, which 
do not capture the properties of residual networks. Some recent 
results do consider residual-like elements (see discussion of related work 
below), but generally do not apply to standard architectures. In particular, we 
are not aware of any theoretical justification for the basic premise of 
ResNets: Namely, that their architecture allows adding layers without harming 
performance. The problem is that training neural networks involves solving a 
highly non-convex problem using local search procedures. Thus, even though 
deeper residual networks can \emph{express} shallower ones, it is not at 
all clear that the training process will indeed converge to such a network (or 
a better one). Perhaps, when we attempt to train the residual network using 
gradient-based methods, we might hit some poor local minimum, with a worse 
error than what can be obtained with a shallower network? This question is the 
main motivation to our work.

A secondary motivation are several recent results (e.g. 
\cite{yun2018critical,safran2017spurious,du2017gradient,liang2018understanding}),which
 demonstrate how spurious local minima (with value larger than the global
minima) do exist in general when training neural networks, even under fairly 
strong assumptions. Thus, instead of aiming for a result demonstrating that no 
such minima exist, which might be too good to be true on realistic networks, we 
can perhaps consider a more modest goal, showing that no such minima exist 
above a certain (non-trivial) level set. This level set can correspond, for 
instance, to the optimal value attainable by shallower networks, without the 
additional residual layers.

In this paper, we study these questions by considering the competitiveness of a
simple residual network (composed of an arbitrarily deep, nonlinear residual 
unit and a linear output layer) with respect to linear predictors (or 
equivalently, 1-layer networks). Specifically, we consider the optimization 
problem associated with training such a residual network, which is in general 
non-convex and can have a complicated structure. Nevertheless, we prove that 
the optimization landscape has no local minima \emph{with 
a value higher} than what can be achieved with a linear predictor on the same 
data. In other words, if we run a local search procedure and reach a local 
minimum, we are assured that the solution is no worse than the best 
obtainable with a linear predictor. Importantly, we show this under fairly 
minimal assumptions on the residual unit, no assumptions on the data 
distribution (such as linear separability), and no assumption on the loss 
function used besides smoothness and convexity in the network's output (which 
is satisfied for losses used in practice). In addition, we provide a 
quantitative analysis, which shows how every point which is $\epsilon$-close to 
being stationary in certain directions (see \secref{sec:setting} for a precise 
definition) can't be more than $\text{poly}(\epsilon)$ worse than any fixed 
linear predictor.

The results above are geometric in nature. As we explain later on, they do not 
necessarily imply that standard gradient-based methods will indeed converge to 
such desirable solutions (for example, since the iterates might diverge). 
Nevertheless, we also provide an algorithmic result, 
showing that if the residual architecture is
changed a bit, then a standard stochastic gradient descent (SGD) procedure will 
result in a predictor similar or better than the best linear predictor. This 
result relies on a simple, but perhaps unexpected reduction to the setting of 
online learning, and might be of independent interest. 

Our paper is structured as follows. After discussing related work below, we 
formally define our setting and notation in \secref{sec:setting}. In 
\secref{sec:competitiveness}, we present our main results, showing how our 
residual networks have no spurious local minima above the level set obtainable 
with linear predictors. In \secref{sec:norm}, we discuss the challenges in 
translating this geometric result to an algorithmic one (in particular, the 
possibility of suboptimal approximate stationary points far enough from the 
origin). In \secref{sec:skipoutput} we do provide a positive algorithmic 
result, assuming that the network architecture is changed a bit. All proofs are 
provided in \secref{sec:proofs}. Finally, in Appendix \ref{app:vec}, we 
discuss a generalization of our results to vector-valued outputs.

\subsection*{Related Work}

As far as we know, existing rigorous theoretical results on residual networks 
all pertain to linear networks, which combine linear residual units of the form
\[
\by ~=~ \bx+W\bx ~=~ (I+W)\bx~.
\]
Although such networks are not used in practice, they capture important aspects 
of the non-convexity associated with training residual networks. In particular, 
\citet{hardt2016identity} showed that linear residual networks with the squared 
loss have no spurious local minima (namely, every local minimum is also a 
global one). More recently, \citet{bartlett2018gradient} proved convergence 
results for gradient descent on such problems, assuming the inputs are 
isotropic and the target linear mapping is symmetric and positive definite. 
Showing similar results for non-linear networks is mentioned in  
\citet{hardt2016identity} as a major open problem. In our paper, we focus on 
non-linear residual units, but consider only local minima above some level set. 

In terms of the setting, perhaps the work closest to ours is 
\citet{liang2018understanding}, 
which considers networks which can be written as  
$\bx\mapsto f_S(\bx)+f_D(\bx)$, where $f_S$ is a one-hidden-layer network, and 
$f_D$ is an arbitrary, possibly deeper 
network. Under technical assumptions on the data distribution, activations 
used, network size, and assuming certain classification losses, the authors 
prove that the training objective is benign, in the 
sense that the network corresponding to any local minimum has zero 
classification error. However, as the authors point out, their architecture is 
different than standard ResNets (which would require a final tunable layer to 
combine the outputs of $f_S,f_D$), and their results provably do not hold under 
such an architecture. Moreover, the technical 
assumptions are non-trivial, do not apply as-is to standard activations and 
losses (such as the ReLU activation and the logistic loss), and require 
specific conditions on the data, such as linear separability or a certain 
low-rank structure. In contrast, we study a more standard residual unit, and 
make minimal or no assumptions on the network, data distribution, and loss 
used. On the flip side, we only prove results for local minima above a certain 
level set, rather than all such points. 

Finally, the idea of studying stationary points in non-convex optimization 
problems, which are above or below some reference level set, has also been 
explored in some other works (e.g. \cite{ge2017optimization}), but under 
settings quite different than ours.

\section{Setting and Preliminaries}\label{sec:setting}

We start with a few words about basic notation and terminology. We 
generally use bold-faced letters to denote vectors (assumed to be in column 
form), and capital letters to 
denote matrices or functions. $\norm{\cdot}$ refers to the Euclidean norm for 
vectors and spectral norm for matrices, unless specified otherwise. 
$\norm{\cdot}_{Fr}$ for matrices denotes the Frobenius norm (which always upper 
bounds the spectral norm). For a matrix $M$, $\text{vec}(M)$ refers to the 
entries of $M$ written as one long vector (according to some canonical order).  
Given a function $g$ on Euclidean space, $\nabla g$ 
denotes its gradient and $\nabla^2 g$ denotes its 
Hessian. A point $\bx$ in the domain of a 
function $g$ is a local minimum, if $g(\bx)\leq g(\bx')$ for any $\bx'$ in some 
open neighborhood of $\bx$. Finally, we use standard $\Ocal(\cdot)$ and 
$\Theta(\cdot)$ notation to 
hide constants, and let $\text{poly}(\bx_1,\ldots,\bx_r)$ refer
to an expression which is polynomial in $\bx_1,\ldots,\bx_r$. 

We consider a residual network architecture, consisting of a residual unit as 
in \eqref{eq:resnet}, composed with a linear output layer, with scalar 
output\footnote{See Appendix \ref{app:vec} for a discussion of how some of our 
results can be generalized to networks with vector-valued outputs.}:
\[
\bx~\mapsto~\bw^\top\left(\bx+g_{\Phi}(\bx)\right).
\]
We will make no assumptions on the structure of each $g_{\Phi}$, nor on the 
overall depth of the network which computes it, except that it's last layer is 
a tunable linear transformation (namely, that 
$g_{\Phi}(\bx)=Vf_{\theta}(\bx)$ for some matrix $V$, not necessarily a 
square one, and parameters $\theta$). This condition follows the ``full 
pre-activation'' structure proposed in \citet{he2016identity}, which was 
empirically found to be the 
best-performing residual unit architecture, and is commonly used in practice 
(e.g. in TensorFlow). We depart from that structure only in that $V$ is fully 
tunable 
rather than a convolution, to facilitate and simplify our theoretical study. 
Under this assumption, we have that given $\bx$, the network outputs
\[
\bx~\mapsto \bw^\top\left(\bx+Vf_{\theta}(\bx)\right)~,
\]
parameterized by a vector $\bw$, a matrix $V$, and with some (possibly 
complicated) function $f_{\theta}$ parameterized by $\theta$.

\begin{remark}[Biases]
We note that 
this model can easily incorporate biases, namely predictors of the form 
$\bx~\mapsto 
\bw^\top\left(\bx+Vf_{\theta}(\bx)+\ba\right)+a$ for some tunable $a,\ba$, by 
the standard trick of augmenting $\bx$ with an additional coordinate whose 
value is always $1$, and assuming that $f_{\theta}(\bx)$ outputs a vector with 
an additional coordinate of value $1$. Since our results do not depend on the 
data geometry or specifics of $f_{\theta}$, they would not be affected by such 
modifications. 
\end{remark}

We assume that our network is trained with respect to some data 
distribution (e.g. an average over some training set $\{\bx_i,y_i\}$), using a 
loss function $\ell(p,y)$, where $p$ is the network's prediction and $y$ is the 
target value. Thus, we consider the optimization problem
\begin{equation}\label{eq:obj}
\min_{\bw,V,\theta}~F(\bw,V,\theta) ~:=~
\E_{\bx,y}\left[\ell(\bw^\top(\bx+V f_{\theta}(\bx));y)\right]~,
\end{equation}
where $\bw,V,\theta$ are unconstrained. This objective will be the main focus 
of our paper.  In general, this objective is not convex in
$(\bw,V,\theta)$, and can easily have spurious local minima and saddle points.

In our results, we will make no explicit assumptions on the distribution of 
$(\bx,y)$, nor on the structure of $f_{\theta}$. As to the loss, we will assume 
throughout the paper the following:
\begin{assumption}\label{ass:diff}
	For any $y$, the loss $\ell(p,y)$ is twice differentiable and convex in $p$.
\end{assumption}
This assumption is mild, and is satisfied for standard losses such as the 
logistic loss, squared loss, smoothed hinge loss etc. Note that under this 
assumption, $F(\bw,V,\theta)$ is twice-differentiable with respect to $\bw,V$, 
and in particular the function defined as
\[
F_{\theta}(\bw,V)~:=~F(\bw,V,\theta)
\]
(for any fixed $\theta$) is twice-differentiable. We emphasize that throughout 
the paper, we will \emph{not} assume that $F$ is necessarily differentiable 
with respect to $\theta$ (indeed, if $f_{\theta}$ represents a network with  
non-differentiable 
operators such as ReLU or the $\max$ function, we cannot expect 
that $F$ will be differentiable 
everywhere). When 
considering derivatives of $F_{\theta}$, we think of the input as one long 
vector in Euclidean space (in order specified by $\text{vec}()$), so $\nabla 
F_{\theta}$ is a vector and $\nabla^2 
F_{\theta}$ is a matrix. 

As discussed in the introduction, we wish to compare our objective value to 
that obtained by linear predictors. Specifically, we will use the notation
\[
F_{lin}(\bw) ~:=~ F(\bw,\mathbf{0},\theta) = 
\E_{\bx,y}\left[\ell(\bw^\top\bx;y)\right]
\]
to denote the expected loss of a \emph{linear} predictor parameterized by the 
vector $\bw$. By Assumption \ref{ass:diff}, this function is convex and
twice-differentiable.

Finally, we introduce the following class of points, which behave approximately 
like local minima of $F$ with respect to $(\bw,V)$, in terms of its first two 
derivatives:

\begin{definition}[$\epsilon$-SOPSP] 
Let $\mathcal{M}$ be an open subset of the domain 
of  $F(\bw,V,\theta)$, on which $\nabla^2 F_{\theta}(\bw,V)$ is 
$\mu_2$-Lipschitz in $(\bw,V)$. Then 
$(\bw,V,\theta)\in\mathcal{M}$ is an 
\emph{$\epsilon$-second-order partial stationary point ($\epsilon$-SOPSP)} of 
$F$ on $\mathcal{M}$, if 
\[
\norm{\nabla F_{\theta}(\bw,V)}\leq \epsilon~~~\text{and}~~~
\lambda_{\min}(\nabla^2 F_{\theta}(\bw,V))\geq 
-\sqrt{\mu_2 \epsilon}~.
\]  
\end{definition}%
Importantly, note that \emph{any} local minimum $(\bw,V,\theta)$ of $F$ 
must be 
a $0$-SOPSP: This is because $(\bw,V)$ is a local minimum of the 
(differentiable) function $F_{\theta}$, 
hence $\norm{\nabla F_{\theta}(\bw,V)}=0$ and $\lambda_{\min}(\nabla^2 
F_{\theta}(\bw,V))\geq 0$. Our definition above directly generalizes the 
well-known notion of $\epsilon$-second-order stationary points (or 
$\epsilon$-SOSP) 
\citep{mccormick1977modification,nesterov2006cubic,jin2017escape}, which are 
defined for functions which are twice-differentiable in all of their 
parameters. 
In fact, our definition of $\epsilon$-SOPSP is equivalent to requiring 
that $(\bw,V)$ is an $\epsilon$-SOSP of $F_{\theta}$. We need to use this more 
general definition, because we are not assuming that $F$ is differentiable in 
$\theta$. Interestingly, $\epsilon$-SOSP is one of the most 
general classes of points in non-convex optimization, to which gradient-based 
methods can be shown to converge in $\text{poly}(1/\epsilon)$ iterations. 

\section{Competitiveness with Linear Predictors}\label{sec:competitiveness}

Our main results are \thmref{thm:mainstat} and Corollary \ref{cor:nolocal} 
below, 
which are proven in two stages: First, 
we show that at any point such that $\bw\neq \mathbf{0}$, $\norm{\nabla 
F_{\theta}(\bw,V)}$ is 
lower bounded in terms of the suboptimality with respect to the best linear 
predictor (\thmref{thm:pl}). We then consider the case $\bw=\mathbf{0}$, and 
show that for such points, if they are suboptimal with respect to the best 
linear predictor, then \emph{either} $\norm{\nabla F_{\theta}(\bw,V)}$ is 
strictly positive, \emph{or} $\lambda_{\min}(\nabla^2 F_{\theta}(\bw,V))$ 
is strictly negative (\thmref{thm:atzero}). Thus, building on the definition of 
$\epsilon$-SOPSP from the previous section, we can show that no point which is 
suboptimal (compared to a linear predictor) can be a local minimum of $F$. 

\begin{theorem}\label{thm:pl}
	At any point $(\bw,V,\theta)$ such that $\bw\neq\mathbf{0}$, and for any	
	vector $\bw^*$ of the same dimension as $\bw$, 
	\[
	\norm{\nabla F_{\theta}(\bw,V)} ~\geq~ 
	\frac{F(\bw,V,\theta)-F_{lin}(\bw^*)}{\sqrt{2\norm{\bw}^2+\norm{\bw^*}^2\left(2+\frac{\norm{V}^2}
			{\norm{\bw}^2}\right)}}~.
	\]
\end{theorem}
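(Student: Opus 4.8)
The plan is to prove this as a gradient-domination (Polyak--\L{}ojasiewicz-type) inequality: I will exhibit an explicit direction in the $(\bw,V)$ coordinates along which $F_\theta$ decreases toward $F_{lin}(\bw^*)$, relate the corresponding directional derivative to the suboptimality gap via convexity of the loss, and then apply Cauchy--Schwarz.

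Concretely, write $p:=\bw^\top(\bx+Vf_\theta(\bx))$ for the network's output on $\bx$ and $q:=\bw^{*\top}\bx$ for the linear predictor's output. By Assumption~\ref{ass:diff}, $\ell(\cdot;y)$ is convex, so $\ell(q;y)\geq \ell(p;y)+\ell'(p;y)(q-p)$ pointwise, where $\ell'$ denotes the derivative in the first argument; taking expectations gives
$$F(\bw,V,\theta)-F_{lin}(\bw^*)~\leq~\E_{\bx,y}\big[\ell'(p;y)\,(p-q)\big]~,$$
and $p-q=(\bw-\bw^*)^\top\bx+\bw^\top V f_\theta(\bx)$. The next step is to recognize the right-hand side as a directional derivative of $F_\theta$. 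Differentiating under the expectation (licensed by the assumed differentiability of $F_\theta$) one has $\nabla_\bw F_\theta=\E[\ell'(p;y)(\bx+Vf_\theta(\bx))]$ and $\nabla_V F_\theta=\E[\ell'(p;y)\,\bw f_\theta(\bx)^\top]$, so that for a perturbation $(\bu,U)$ of $(\bw,V)$,
$$\inner{\nabla_\bw F_\theta,\bu}+\inner{\nabla_V F_\theta,U}~=~\E\big[\ell'(p;y)\big(\bu^\top\bx+(\bu^\top V+\bw^\top U)f_\theta(\bx)\big)\big]~.$$
Choosing $\bu:=\bw-\bw^*$ matches the $\bx$-term; to match the $f_\theta(\bx)$-term it suffices to pick $U$ with $\bu^\top V+\bw^\top U=\bw^\top V$, i.e.\ $\bw^\top U=\bw^{*\top}V$, and since $\bw\neq\mathbf 0$ we may take $U:=\bw\bw^{*\top}V/\norm{\bw}^2$. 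Then $F(\bw,V,\theta)-F_{lin}(\bw^*)\leq\inner{\nabla F_\theta(\bw,V),(\bu,U)}\leq\norm{\nabla F_\theta(\bw,V)}\cdot\sqrt{\norm{\bu}^2+\norm{U}_{Fr}^2}$ by Cauchy--Schwarz.

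It remains to bound the chosen direction. One has $\norm{\bu}^2=\norm{\bw-\bw^*}^2\leq 2\norm{\bw}^2+2\norm{\bw^*}^2$, while $U=\bw(V^\top\bw^*)^\top/\norm{\bw}^2$ is a scaled outer product, so $\norm{U}_{Fr}=\norm{\bw}\,\norm{V^\top\bw^*}/\norm{\bw}^2\leq\norm{V}\norm{\bw^*}/\norm{\bw}$ and hence $\norm{U}_{Fr}^2\leq\norm{V}^2\norm{\bw^*}^2/\norm{\bw}^2$. Summing, $\norm{\bu}^2+\norm{U}_{Fr}^2\leq 2\norm{\bw}^2+\norm{\bw^*}^2(2+\norm{V}^2/\norm{\bw}^2)$, and substituting into the Cauchy--Schwarz bound and rearranging yields the claim.

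The one genuinely nonobvious step is the choice of $U$: the prediction gap $p-q$ contains the term $\bw^\top V f_\theta(\bx)$, which no perturbation of $\bw$ alone can account for, and the key observation is that it can instead be absorbed by perturbing $V$ --- but only at the cost of ``dividing by $\bw$'', which is precisely why the hypothesis $\bw\neq\mathbf 0$ enters and why the factor $\norm{V}^2/\norm{\bw}^2$ appears in the denominator of the final bound. Everything else is routine; in particular the descent direction holds $\theta$ fixed, so we never differentiate with respect to $\theta$, consistent with only assuming smoothness of $F$ in $(\bw,V)$.
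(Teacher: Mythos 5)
Your proposal is correct and takes essentially the same route as the paper: your chosen perturbation $(\bu,U)=\bigl(\bw-\bw^*,\;\bw(\bw^*)^\top V/\norm{\bw}^2\bigr)$ is exactly the matrix $G$ in the paper's key lemma, and the convexity step, Cauchy--Schwarz, and the bound on $\norm{\bu}^2+\norm{U}_{Fr}^2$ all match the paper's argument. The only (immaterial) difference is that you bound $\norm{U}_{Fr}$ via the exact rank-one Frobenius norm rather than submultiplicativity, which gives the same final expression.
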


The theorem implies that for any point $(\bw,V,\theta)$ for which the 
objective 
value $F(\bw,V,\theta)$ is larger than that of some linear predictor 
$F_{lin}(\bw^*)$, and unless $\bw=\mathbf{0}$, its partial derivative with 
respect to $(\bw,V)$ (namely $\nabla F_{\theta}(\bw,V)$) is non-zero, so 
it cannot be a stationary point with respect to $\bw,V$, nor a local minimum of 
$F$. The proof (in 
\secref{sec:proofs}) relies on showing that 
the inner product of $\nabla F_{\theta}(\bw,V)$ with a certain 
carefully-chosen vector can be lower bounded by 
$F(\bw,V,\theta)-F_{lin}(\bw^*)$. 

To analyze the case $\bw=\mathbf{0}$, we have the following result:

\begin{theorem}\label{thm:atzero}
	For any 
	$V,\theta,\bw^*$, 
	\[
	\lambda_{\min}\left(\nabla^2 F_{\theta}(\mathbf{0},V)\right)~\leq~0
	\]
	and
	\begin{align*}
	\norm{\nabla
		F_{\theta}(\mathbf{0},V)}+\norm{V}&
		\sqrt{\left|\lambda_{\min}\left(\nabla^2 
			F_{\theta}(\mathbf{0},V)\right)\right|\cdot 
				\left\|\frac{\partial^2}{\partial\bw^2}F_{\theta}(\mathbf{0},V)
				\right\|+\lambda_{\min}
			\left(\nabla^2 F_{\theta}(\mathbf{0},V)\right)^2}\\
	&~\geq~ \frac{F(\mathbf{0},V,\theta)-F_{lin}(\bw^*)}{\norm{\bw^*}}~,
	\end{align*}
	where 
	$\lambda_{\min}(M)$ denotes the minimal eigenvalue of a symmetric matrix 
	$M$. 
\end{theorem}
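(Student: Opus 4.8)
The plan is to compute the first two derivatives of $F$ with respect to $(\bw,V)$ at points of the form $(\mathbf{0},V,\theta)$ and to read off both claims from the resulting block structure. Write $p=\bw^\top(\bx+V\bz)$ for the network output with $\bz:=f_\theta(\bx)$, and set $\bmu:=\E[\ell'(0;y)\bx]$ and $\bc:=\E[\ell'(0;y)\bz]$. Using the differentiability in $(\bw,V)$ guaranteed by Assumption~\ref{ass:diff}, a routine computation shows that at $\bw=\mathbf{0}$ one has $p\equiv0$, the block of the gradient indexed by the entries of $V$ vanishes, and
\[
\nabla F_\theta(\mathbf{0},V)=\bigl(\bmu+V\bc\,,\,\mathbf{0}\bigr),\qquad
\nabla^2 F_\theta(\mathbf{0},V)=\begin{pmatrix}A & B\\ B^\top & 0\end{pmatrix},
\]
where $A:=\frac{\partial^2}{\partial\bw^2}F_\theta(\mathbf{0},V)=\E[\ell''(0;y)(\bx+V\bz)(\bx+V\bz)^\top]$ is positive semidefinite (by convexity of $\ell$ in $p$), the block indexed by the entries of $V$ is exactly the zero matrix, and the mixed block $B$ acts on a perturbation matrix $V'$ by $B\,\text{vec}(V')=V'\bc$. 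The only mildly delicate part here is the mixed second derivative $\partial^2F/\partial w_k\partial V_{ij}$, whose term surviving at $\bw=\mathbf{0}$ is what produces $\bc$.

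The first claim is then immediate: the quadratic form of $\nabla^2 F_\theta(\mathbf{0},V)$ on any direction of the form $(\mathbf{0},\text{vec}(V'))$ equals $0$, since the block indexed by the entries of $V$ vanishes; hence $\lambda_{\min}(\nabla^2 F_\theta(\mathbf{0},V))\le 0$.

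For the second inequality I would argue in three steps. (i) Convexity of $\ell(\cdot;y)$ gives $\ell((\bw^*)^\top\bx;y)\ge\ell(0;y)+\ell'(0;y)(\bw^*)^\top\bx$; taking expectations and rearranging yields $F(\mathbf{0},V,\theta)-F_{lin}(\bw^*)\le-\inner{\bw^*,\bmu}\le\norm{\bw^*}\,\norm{\bmu}$, so it suffices to bound $\norm{\bmu}$. (ii) Since the $\bw$-part of $\nabla F_\theta(\mathbf{0},V)$ is $\bmu+V\bc$, we have $\norm{\bmu}\le\norm{\nabla F_\theta(\mathbf{0},V)}+\norm{V}\,\norm{\bc}$. (iii) It remains to bound $\norm{\bc}$. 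Assuming $\bc\ne\mathbf{0}$ (the claim being trivial otherwise), fix any unit vector $\be$ of the dimension of $\bw$ and insert the unit-norm direction $(\bu,\text{vec}(V'))$ with $\bu=\alpha\be$, $V'=-\beta\,\be\bc^\top/\norm{\bc}$ and $\alpha^2+\beta^2=1$ into $(\bu,\text{vec}(V'))^\top\nabla^2F_\theta(\mathbf{0},V)(\bu,\text{vec}(V'))\ge\lambda_{\min}(\nabla^2F_\theta(\mathbf{0},V))$. Since $V'\bc=-\beta\norm{\bc}\,\be$, this reduces to $\alpha^2\,\be^\top A\be-2\alpha\beta\norm{\bc}\ge\lambda_{\min}(\nabla^2F_\theta(\mathbf{0},V))$ for every such $\alpha,\beta$; minimizing the left-hand side over the unit circle (a one-variable trigonometric optimization) gives $\tfrac12\be^\top A\be-\sqrt{\tfrac14(\be^\top A\be)^2+\norm{\bc}^2}\ge\lambda_{\min}(\nabla^2F_\theta(\mathbf{0},V))$, and squaring, together with $0\le\be^\top A\be\le\norm{A}$, yields exactly $\norm{\bc}^2\le\bigl|\lambda_{\min}(\nabla^2F_\theta(\mathbf{0},V))\bigr|\cdot\norm{A}+\lambda_{\min}(\nabla^2F_\theta(\mathbf{0},V))^2$. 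Chaining (i)--(iii) and multiplying by $\norm{\bw^*}$ gives the stated inequality.

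I expect step (iii) to be the main obstacle: one has to recognize that a rank-one perturbation coupling the $\bw$- and $V$-coordinates is the right test direction, and then verify that the trigonometric optimization reproduces precisely the expression $\sqrt{|\lambda_{\min}|\cdot\norm{A}+\lambda_{\min}^2}$ that appears in the theorem. Everything else is bookkeeping, modulo the (here assumed) ability to differentiate $F$ twice in $(\bw,V)$ under the expectation.
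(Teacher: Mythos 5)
Your proposal is correct and takes essentially the same route as the paper: the same Hessian block structure at $\bw=\mathbf{0}$ (vanishing $V$-$V$ block, mixed block built from $\bc=\E[\ell'(0;y)f_{\theta}(\bx)]$), the same decomposition of the $\bw$-gradient as $\nabla F_{lin}(\mathbf{0})+V\bc$, convexity to relate $\norm{\nabla F_{lin}(\mathbf{0})}$ to the suboptimality, and the identical key bound $\norm{\bc}^2\leq\left|\lambda_{\min}\right|\cdot\left\|\frac{\partial^2}{\partial\bw^2}F_{\theta}(\mathbf{0},V)\right\|+\lambda_{\min}^2$. The only cosmetic difference is that the paper obtains this bound by passing to a principal submatrix and invoking eigenvalue interlacing together with Lemma~\ref{lem:mineig}, whereas you plug rank-one test directions coupling the $\bw$- and $V$-coordinates into the full quadratic form and solve the resulting two-dimensional minimization directly --- an equivalent computation.
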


Combining the two theorems above, we can show the following main result:
\begin{theorem}\label{thm:mainstat}
	Fix some positive 
	$b,r,\mu_0,\mu_1,\mu_2$ and $\epsilon\geq 0$, and suppose $\mathcal{M}$ is 
	some convex open subset 
	of the domain of $F(\bw,V,\theta)$ in which 
	\begin{itemize}
		\item $\max\{\norm{\bw},\norm{V}\}\leq b$
		\item $F_{\theta}(\bw,V)$, $\nabla F_{\theta}(\bw,V)$ and $\nabla^2 
		F_{\theta}(\bw,V)$  are $\mu_0$-Lipschitz, $\mu_1$-Lipschitz, and 
		$\mu_2$-Lipschitz in $(\bw,V)$ respectively.
		\item For any $(\bw,V,\theta)\in\Wcal$, we have 
				$(\mathbf{0},V,\theta)\in 
				\Wcal$ and  $\norm{\nabla^2 F_{\theta}(\mathbf{0},V)}\leq 
				\mu_1$. 
	\end{itemize}
	Then for any $(\bw,V,\theta)\in	\mathcal{M}$ which is an 
	$\epsilon$-SOPSP of $F$ on $\mathcal{M}$,
	\[
	F(\bw,V,\theta)~\leq~ \min_{\bw:\norm{\bw}\leq r} 
	F_{lin}(\bw)+(\epsilon+\sqrt[4]{\epsilon})\cdot\text{poly}(b,r,\mu_0,\mu_1,\mu_2).
	\]
\end{theorem}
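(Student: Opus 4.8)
The plan is to combine \thmref{thm:pl} and \thmref{thm:atzero} by a case split on whether $\bw$ is ``small'' or ``large'' at the given $\epsilon$-SOPSP $(\bw,V,\theta)$, using the Lipschitz bounds to convert the various derivative quantities into the concrete $\text{poly}(b,r,\mu_0,\mu_1,\mu_2)$ factors. Throughout, fix $\bw^*$ to be the minimizer of $F_{lin}$ over $\norm{\bw}\leq r$, so $\norm{\bw^*}\leq r$ and $F_{lin}(\bw^*)=\min_{\norm{\bw}\leq r}F_{lin}(\bw)$. Denote the suboptimality gap by $\Delta:=F(\bw,V,\theta)-F_{lin}(\bw^*)$; the goal is to show $\Delta\leq(\epsilon+\sqrt[4]{\epsilon})\cdot\text{poly}(b,r,\mu_0,\mu_1,\mu_2)$, and we may assume $\Delta>0$ since otherwise there is nothing to prove.

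First I would handle the regime where $\norm{\bw}$ is bounded below by some threshold $\tau>0$ (to be chosen). Here \thmref{thm:pl} applies directly: since $\norm{\bw}\leq b$, $\norm{V}\leq b$, $\norm{\bw^*}\leq r$, the denominator $\sqrt{2\norm{\bw}^2+\norm{\bw^*}^2(2+\norm{V}^2/\norm{\bw}^2)}$ is at most $\sqrt{2b^2+r^2(2+b^2/\tau^2)}=:D(\tau)$, so $\Delta\leq D(\tau)\cdot\norm{\nabla F_{\theta}(\bw,V)}\leq D(\tau)\cdot\epsilon$ by the $\epsilon$-SOPSP gradient bound. The only subtlety is the $1/\tau^2$ blow-up in $D(\tau)$, which is why the threshold $\tau$ cannot be taken too small; it will be fixed later to balance against the other case.

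Next I would handle $\norm{\bw}\leq\tau$. The idea is that near $\bw=\mathbf{0}$ the point behaves like the genuinely degenerate point $(\mathbf{0},V,\theta)$, to which \thmref{thm:atzero} applies, and the Lipschitz hypotheses let us transfer bounds. Concretely: $\norm{F(\bw,V,\theta)-F(\mathbf{0},V,\theta)}\leq\mu_0\tau$ since $F_\theta$ is $\mu_0$-Lipschitz and $\mathbf{0}\mapsto\bw$ moves distance $\norm{\bw}\leq\tau$ within $\Wcal$ (using that $(\mathbf{0},V,\theta)\in\Wcal$ and $\Wcal$ is convex); hence $F(\mathbf{0},V,\theta)-F_{lin}(\bw^*)\geq\Delta-\mu_0\tau$. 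Now apply \thmref{thm:atzero} at $(\mathbf{0},V,\theta)$. Write $\delta:=\epsilon$-SOPSP parameters transported to $\mathbf{0}$: by the $\mu_1$-Lipschitz continuity of $\nabla F_\theta$ and $\mu_2$-Lipschitz continuity of $\nabla^2 F_\theta$, we get $\norm{\nabla F_\theta(\mathbf{0},V)}\leq\norm{\nabla F_\theta(\bw,V)}+\mu_1\tau\leq\epsilon+\mu_1\tau$, and $\lambda_{\min}(\nabla^2 F_\theta(\mathbf{0},V))\geq\lambda_{\min}(\nabla^2 F_\theta(\bw,V))-\mu_2\tau\geq-\sqrt{\mu_2\epsilon}-\mu_2\tau$, while from \thmref{thm:atzero} itself $\lambda_{\min}(\nabla^2 F_\theta(\mathbf{0},V))\leq 0$, so $\bigl|\lambda_{\min}(\nabla^2 F_\theta(\mathbf{0},V))\bigr|\leq\sqrt{\mu_2\epsilon}+\mu_2\tau$. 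Also $\norm{V}\leq b$ and $\bigl\|\tfrac{\partial^2}{\partial\bw^2}F_\theta(\mathbf{0},V)\bigr\|\leq\norm{\nabla^2 F_\theta(\mathbf{0},V)}\leq\mu_1$ by the third hypothesis. Plugging these into the displayed inequality of \thmref{thm:atzero} and dividing through, I get
\[
\frac{\Delta-\mu_0\tau}{r}~\leq~(\epsilon+\mu_1\tau)+b\sqrt{(\sqrt{\mu_2\epsilon}+\mu_2\tau)\mu_1+(\sqrt{\mu_2\epsilon}+\mu_2\tau)^2}~,
\]
so that $\Delta\leq\mu_0\tau+r\bigl[(\epsilon+\mu_1\tau)+b\sqrt{(\sqrt{\mu_2\epsilon}+\mu_2\tau)\mu_1+(\sqrt{\mu_2\epsilon}+\mu_2\tau)^2}\bigr]$. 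Using $\sqrt{a+c}\leq\sqrt{a}+\sqrt{c}$ and $\sqrt{\mu_2\tau}\leq\sqrt{\mu_2}\sqrt{\tau}$, the dominant terms in $\tau$ are $O(\tau)$ and $O(\sqrt{\tau})$, and the dominant terms in $\epsilon$ are $O(\epsilon)$, $O(\sqrt[4]{\epsilon})$ (from $\sqrt{\sqrt{\mu_2\epsilon}\cdot\mu_1}$) and $O(\sqrt{\epsilon})$ (which is dominated by $\sqrt[4]{\epsilon}$ up to constants and powers of $\mu_2$, for $\epsilon$ in a bounded range — or one simply keeps $\sqrt{\epsilon}\leq\sqrt[4]{\epsilon}\cdot$const on bounded sets, or absorbs it since $\text{poly}$ allows it).

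Finally I would choose the threshold $\tau$ to balance the two cases: the first case gives $\Delta\leq(2b^2+2r^2+b^2r^2/\tau^2)^{1/2}\epsilon$, which degrades as $\tau\to0$, and the second gives $\Delta\leq\text{poly}(b,r,\mu_0,\mu_1,\mu_2)\cdot(\tau+\sqrt{\tau}+\epsilon+\sqrt[4]{\epsilon})$, which degrades as $\tau$ grows. Taking $\tau=\sqrt{\epsilon}$ (so $\sqrt{\tau}=\sqrt[4]{\epsilon}$) makes the first-case denominator $O(br/\sqrt[4]{\epsilon}\cdot\text{stuff})$ times $\epsilon$, i.e.\ $O(\sqrt[4]{\epsilon}\cdot br)$ plus lower order, and makes the second-case $\tau$-terms $O(\sqrt{\epsilon}+\sqrt[4]{\epsilon})$; in both cases everything collapses to $(\epsilon+\sqrt[4]{\epsilon})\cdot\text{poly}(b,r,\mu_0,\mu_1,\mu_2)$ as required. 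Since at least one of the two cases holds, this bound holds unconditionally, completing the proof.

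The main obstacle I anticipate is purely bookkeeping: carefully tracking how each Lipschitz constant enters when transporting the gradient, Hessian, and function value from $(\bw,V,\theta)$ to $(\mathbf{0},V,\theta)$, and then verifying that after the choice $\tau=\sqrt{\epsilon}$ every term is genuinely of the form $\epsilon\cdot\text{poly}$ or $\sqrt[4]{\epsilon}\cdot\text{poly}$ — in particular that the $1/\tau^2=1/\epsilon$ inside the square root of \thmref{thm:pl}'s denominator is tamed by the outer factor $\epsilon$ (it contributes $\sqrt{\epsilon^2/\epsilon}=\sqrt{\epsilon}$), and that the cross term $\sqrt{\sqrt{\mu_2\epsilon}\,\mu_1}$ correctly produces the $\sqrt[4]{\epsilon}$ rate. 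No single step is deep; the care is in not losing a power of $\epsilon$ or introducing a spurious $1/\epsilon$.
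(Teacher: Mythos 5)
Your proposal is correct and follows essentially the same route as the paper: split on whether $\norm{\bw}$ exceeds a threshold, apply \thmref{thm:pl} in the large-norm case, transport the function value, gradient, and minimal Hessian eigenvalue to $(\mathbf{0},V,\theta)$ via the Lipschitz assumptions and apply \thmref{thm:atzero} (with $\norm{\nabla^2 F_{\theta}(\mathbf{0},V)}\leq\mu_1$) in the small-norm case, then set the threshold to $\sqrt{\epsilon}$. The only loose point is your handling of the residual $\sqrt{\epsilon}$ terms, which should not be waved away via ``bounded $\epsilon$'' or the poly factor but simply absorbed using $\sqrt{\epsilon}\leq\max\{\epsilon,\sqrt[4]{\epsilon}\}\leq\epsilon+\sqrt[4]{\epsilon}$ for all $\epsilon\geq 0$, exactly as the paper does.
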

We note that the $\text{poly}(b,r,\mu_0,\mu_1,\mu_2)$ term hides only 
dependencies which are at most linear in the individual factors (see the proof 
in \secref{sec:proofs} for the exact expression).

As discussed in \secref{sec:setting}, any local minima of $F$ must correspond 
to a $0$-SOPSP. Hence, the theorem above implies that 
for such a point, $F(\bw,V,\theta)\leq \min_{\bw:\norm{\bw}\leq r} 
F_{lin}(\bw)$ (as long as $F$ satisfies the Lipschitz continuity assumptions 
for some finite $\mu_0,\mu_1,\mu_2$ on 
any bounded subset of the domain). Since this 
holds for any $r$, we have arrived at the following corollary:

\begin{corollary}\label{cor:nolocal}
	Suppose that on any bounded subset of the domain of $F$, it holds that 
	$F_{\theta}(\bw,V),\nabla F_{\theta}(\bw,V)$ and $\nabla^2 
	F_{\theta}(\bw,V)$ are all Lipschitz continuous in $(\bw,V)$. Then every 
	local minimum $(\bw,V,\theta)$ of 
	$F$ satisfies
	\[
	F(\bw,V,\theta)~\leq~ \inf_{\bw}F_{lin}(\bw)~.
	\]
\end{corollary}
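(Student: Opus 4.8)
The plan is to obtain the corollary as a direct specialization of \thmref{thm:mainstat} to $\epsilon=0$, followed by letting the radius $r$ tend to infinity. Fix an arbitrary local minimum $(\bw_0,V_0,\theta_0)$ of $F$. As observed right after the definition of $\epsilon$-SOPSP, restricting $F$ to $\theta=\theta_0$ shows that $(\bw_0,V_0)$ is a local minimum of the twice-differentiable function $F_{\theta_0}$ (twice-differentiability being guaranteed by Assumption~\ref{ass:diff}), so $\nabla F_{\theta_0}(\bw_0,V_0)=\mathbf{0}$ and $\lambda_{\min}(\nabla^2 F_{\theta_0}(\bw_0,V_0))\geq 0$. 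Hence $(\bw_0,V_0,\theta_0)$ qualifies as a $0$-SOPSP of $F$ on \emph{any} open set containing it on which $\nabla^2 F_\theta$ is locally Lipschitz in $(\bw,V)$.

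The one step that needs care is building a set $\mathcal{M}$ that meets all the structural hypotheses of \thmref{thm:mainstat} while containing our point. I would take
\[
\mathcal{M} ~=~ \{(\bw,V,\theta) : \norm{\bw} < b,~ \norm{V} < b,~ \norm{\theta-\theta_0} < 1\},\qquad b := 1+\max\{\norm{\bw_0},\norm{V_0}\}.
\]
This set is convex (a product of a Euclidean ball in $\bw$, a spectral-norm ball in $V$, and a Euclidean ball in $\theta$), open, and bounded; it contains $(\bw_0,V_0,\theta_0)$ by the choice of $b$; and it is closed under replacing $\bw$ by $\mathbf{0}$, so the requirement in the third bullet that $(\mathbf{0},V,\theta)\in\mathcal{M}$ whenever $(\bw,V,\theta)\in\mathcal{M}$ holds. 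Because $\mathcal{M}$ is bounded, the hypothesis of the corollary supplies finite constants $\mu_0,\mu_1,\mu_2$ for which $F_\theta,\nabla F_\theta,\nabla^2 F_\theta$ are respectively $\mu_0$-, $\mu_1$-, $\mu_2$-Lipschitz in $(\bw,V)$ on $\mathcal{M}$; and since $\nabla F_\theta$ being $\mu_1$-Lipschitz already forces $\norm{\nabla^2 F_\theta(\bw,V)}\leq\mu_1$ throughout the open set $\mathcal{M}$, in particular at points of the form $(\mathbf{0},V)$, the remaining part of the third bullet comes for free. (Restricting $\theta$ to a ball is used only to keep $\mathcal{M}$ bounded so that the ``Lipschitz on bounded subsets'' hypothesis yields these uniform constants; neither the first bullet nor the conclusion of \thmref{thm:mainstat} constrains $\theta$.)

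With $b,\mu_0,\mu_1,\mu_2$ fixed as above, every $r>0$ is an admissible choice in \thmref{thm:mainstat}, and applying it with $\epsilon=0$ gives, for each $r>0$,
\[
F(\bw_0,V_0,\theta_0) ~\leq~ \min_{\bw:\norm{\bw}\leq r} F_{lin}(\bw) + (0+\sqrt[4]{0})\cdot\text{poly}(b,r,\mu_0,\mu_1,\mu_2) ~=~ \min_{\bw:\norm{\bw}\leq r} F_{lin}(\bw).
\]
The right-hand side is non-increasing in $r$ and converges to $\inf_{\bw}F_{lin}(\bw)$ as $r\to\infty$, so letting $r\to\infty$ yields $F(\bw_0,V_0,\theta_0)\leq\inf_{\bw}F_{lin}(\bw)$, which is the claim.

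I do not expect a genuine obstacle: the corollary is essentially a repackaging of \thmref{thm:mainstat}. The only subtlety is the construction of $\mathcal{M}$, which must simultaneously be convex, open, bounded, contain the given local minimum, and --- the slightly unusual constraint --- extend all the way down to $\bw=\mathbf{0}$ for every $(V,\theta)$ it contains; this is precisely why a ``cylinder''-type product set, rather than a small ball around the minimizer, is the right choice. Everything else (verifying the $0$-SOPSP conditions from the definition of a local minimum, and the limiting argument in $r$) is routine.
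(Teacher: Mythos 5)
Your proposal is correct and follows essentially the same route as the paper: a local minimum is a $0$-SOPSP, so \thmref{thm:mainstat} with $\epsilon=0$ (on a bounded region where the corollary's hypothesis supplies finite $\mu_0,\mu_1,\mu_2$) gives $F(\bw,V,\theta)\leq\min_{\bw:\norm{\bw}\leq r}F_{lin}(\bw)$ for every $r$, and letting $r\to\infty$ concludes. Your explicit ``cylinder'' construction of $\mathcal{M}$ and the observation that $\mu_1$-Lipschitzness of $\nabla F_\theta$ bounds $\norm{\nabla^2 F_\theta(\mathbf{0},V)}$ simply spell out details the paper leaves implicit.
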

In other words, the objective $F$ has no spurious local minima with value above 
the smallest attainable with a linear predictor.

\begin{remark}[Generalization to vector-valued outputs]
	One can consider a generalization of our setting to networks with 
	vector-valued outputs, namely $\bx\mapsto W(\bx+Vf_{\theta}(\bx))$, where 
	$W$ is a matrix, and with losses $\ell(\mathbf{p},\by)$ taking 
	vector-valued arguments and convex in $\mathbf{p}$ (e.g. the cross-entropy 
	loss). In this more general setting, it is possible 
	to prove a variant of \thmref{thm:pl} using a similar proof technique (see 
	Appendix \ref{app:vec}). However, 
	it is not clear to us how to prove an analog of \thmref{thm:atzero} and 
	hence \thmref{thm:mainstat}. We leave this as a question for future 
	research.
\end{remark}

\section{Effects of Norm and Regularization}\label{sec:norm}

\thmref{thm:mainstat} implies that any $\epsilon$-SOPSP must have a value not 
much worse than that obtained by a linear predictor. Moreover, as discussed in 
\secref{sec:setting}, such points are closely related to second-order 
stationary points, and gradient-based methods 
are known to converge quickly to such points (e.g. 
\citet{jin2017escape}). Thus, it is tempting to claim that such methods will 
indeed result in a network competitive with linear predictors. Unfortunately, 
there is a fundamental catch: The bound of \thmref{thm:mainstat} depends on the 
norm of the point (via $\norm{\bw},\norm{V}$), and can be 
arbitrarily bad if the norm is sufficiently large. In other words, 
\thmref{thm:mainstat} guarantees that a point which is $\epsilon$-SOPSP 
is only ``good'' as long as it is not too far away from the origin. 

If the dynamics of the gradient method are such that the iterates remain in 
some bounded domain (or at least have a sufficiently slowly increasing norm), 
then this would not be an issue. However, we are not a-priori guaranteed that 
this would be the case: 
Since the optimization problem is unconstrained, and we are not assuming 
anything on the structure of $f_{\theta}$, it could be that the 
parameters $\bw,V$ diverge, and no meaningful 
algorithmic result can be derived from \thmref{thm:mainstat}. 

Of course, one option is that this dependence on $\norm{\bw},\norm{V}$ is an 
artifact of the analysis, and any $\epsilon$-SOPSP of 
$F$ is competitive with a linear predictor, regardless of the norms. However, 
the following example shows that this is not the case:

\begin{example}\label{example:badstat}
	Fix some $\epsilon>0$. Suppose $\bx,\bw,V,\bw^*$ are all scalars, 
	$\bw^*=1$, 
	$f_{\theta}(\bx)=\epsilon \bx$ (with no dependence on a parameter 
	$\theta$), $\ell(p;y)=\frac{1}{2}(p-y)^2$ is the squared loss, and 
	$\bx=y=1$ w.p. 
	$1$. Then the objective can be equivalently written as
	\[
	F(w,v) = \frac{1}{2}\left(w(1+\epsilon v)-1\right)^2
	\]
	(see leftmost plot in Figure \ref{fig:reg}).
	The gradient and Hessian of $F(w,v)$ equal
	\[
	\left(
	\begin{matrix}
	(w-1+\epsilon wv)(1+\epsilon v)\\
	(w-1+\epsilon wv)\epsilon w
	\end{matrix}
	\right)~~~\text{and}~~~
	\left(\begin{matrix}
	(1+\epsilon v)^2 & \epsilon(2w+2\epsilon w 
	v-1)\\\epsilon(2w+2\epsilon w v-1) & 
	\epsilon^2 w^2
	\end{matrix}
	\right)
	\]
	respectively. In particular, at $(w,v)=(0,-1/\epsilon)$, the gradient 
	is $\mathbf{0}$ and the Hessian equals
	$\left(\begin{matrix}
	0 & -\epsilon\\ -\epsilon & 0
	\end{matrix}\right)$, which is arbitrarily close to $\mathbf{0}$ if 
	$\epsilon$ is small enough. However, the objective value at that point 
	equals
	\[
	F\left(0,-\frac{1}{\epsilon}\right)~=~\frac{1}{2}>0= F_{lin}(1).
	\]
\end{example}

\begin{remark}
	In the example above, $F$ does not have gradients and Hessians with a 
	uniformly bounded Lipschitz constant (over all of Euclidean space). 
	However, for any $\epsilon>0$, the Lipschitz constants  are bounded by a 
	numerical constant over $(w,v)\in [-2/\epsilon,2/\epsilon]^2$ (which 
	includes the stationary point studied in the construction). This indicates 
	that the problem indeed lies with the norm of $(w,v)$ being unbounded, and 
	not with the Lipschitz constants of the derivatives of $F$.
\end{remark}

One standard approach to ensure that the iterates remain bounded is to add 
regularization, namely optimize 
\[
\min_{\bw,V,\theta}~F(\bw,V,\theta)+R(\bw,V,\theta)~,
\]
where $R$ is a regularization term penalizing large norms of $\bw,V,\theta$. 
Unfortunately, 
not only does this alter the objective, it might also introduce new spurious 
local minima that did not exist in $F(\bw,V,\theta)$. This is graphically 
illustrated in Figure \ref{fig:reg}, which plots $F(w,v)$ from Example 
\ref{example:badstat} (when $\epsilon=1$), with and without regularization of 
the form $R(w,v) = \frac{\lambda}{2}(w^2+v^2)$ where $\lambda=1/2$. Whereas the 
stationary points of 
$F(w,v)$ are either global minima (along two valleys, corresponding to 
$\{(w,v): w(1+\epsilon v)=1\}$) or a 
saddle point (at $(w,v)=(1,-1/\epsilon)$), the regularization created 
a new spurious local minimum around $(w,v)\approx (-1,-1.6)$. Intuitively, this 
is because the regularization makes the objective value increase well before 
the valley of global minima of $F$. Other regularization choices can also lead 
to the same phenomenon. A similar issue can also occur if we impose a hard 
constraint, namely optimize
\[
\min_{\bw,V,\theta:(\bw,V,\theta)\in \mathcal{M}} F(\bw,V,\theta)
\]
for some constrained domain $\mathcal{M}$. Again, as Figure \ref{fig:reg} 
illustrates, this optimization problem can have spurious local 
minima inside its constrained domain, using the same $F$ as before.

Of course, one way to fix this issue is by making the regularization parameter 
$\lambda$ sufficiently small (or the domain $\mathcal{M}$ sufficiently large), 
so that the regularization only comes into 
effect when $\norm{(w,v)}$ is sufficiently large. However, the correct choice 
of $\lambda$ and $\mathcal{M}$ depends on $\epsilon$, and here we run into a 
problem: If $f_{\theta}$ is not simply some fixed $\epsilon$ (as in the example 
above), but changes over time, then we have no a-priori guarantee on how 
$\lambda$ or $\mathcal{M}$ should 
be chosen. Thus, it is not clear that any fixed choice of regularization 
would work, and lead a gradient-based method to a good local minimum. 

\begin{figure}
	\includegraphics[trim=3cm 0cm 3cm 0cm, clip=true,scale=0.6]{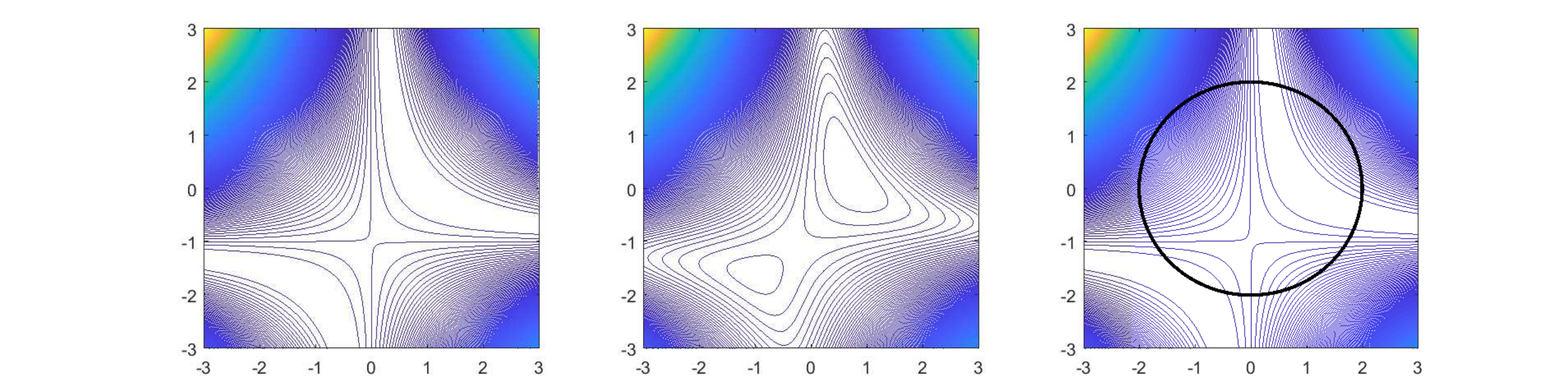}
	\caption{From left to right: Contour plots of (a) $F(w,v)=(w(1+v)-1)^2$, 
	(b) 
	$F(w,v)+\frac{1}{4}(w^2+v^2)$, and (c) $F(w,v)$ superimposed with the 
	constraint $\norm{(w,v)}\leq 2$ (inside the circle). The $x$-axis 
	corresponds 
	to $w$, and the $y$-axis corresponds to $v$. Both (b) and (c) exhibit a 
	spurious local minima in the bottom left quadrant of the domain. 
	Best viewed in color.}\label{fig:reg}
\end{figure}

\section{Success of SGD Assuming a Skip Connection to the 
Output}\label{sec:skipoutput}

Having discussed the challenges of getting an algorithmic result in the 
previous section, we now show how such a result is possible, assuming the 
architecture of our network is changed a bit. 

Concretely, instead of the network architecture $\bx\mapsto 
\bw^\top(\bx+Vf_{\theta}(\bx))$, we consider the architecture 
\[
\bx~\mapsto ~ \bw^\top \bx + \bv^\top f_{\theta}(\bx),
\]
parameterized by vectors $\bw,\bv$ and $\theta$, so our new objective can be 
written as 
\[
F(\bw,\bv,\theta) ~=~ \E_{\bx,y}\left[\ell\left(\bw^\top \bx + \bv^\top 
f_{\theta}(\bx);y\right)\right]~.
\]
This architecture corresponds to having a skip connection directly to the 
network's output, rather than to a final linear output layer. It is similar in 
spirit to the skip-connection studied in \citet{liang2018understanding}, except 
that they had a two-layer nonlinear network instead of our linear  
$\bw^\top\bx$ component.

In what follows, we consider a standard stochastic gradient descent (SGD) 
algorithm to train 
our network: Fixing a step size $\eta$ and some convex parameter domain 
$\mathcal{M}$, we
\begin{enumerate}
	\item Initialize $(\bw_1,\bv_1,\theta_1)$ at some point in $\mathcal{M}$
	\item For $t=1,2,\ldots,T$, we randomly sample a data point $(\bx_t,y_t)$ 
	from the underlying data distribution, and perform
	\[
	(\bw_{t+1},\bv_{t+1},\theta_{t+1}) ~=~ 
	\Pi_{\mathcal{M}}\left((\bw_t,\bv_t,\theta_t)-\eta 
	\nabla 
	h_t(\bw_t,\bv_t,\theta_t)\right),
	\]
	where
	\[
	h_t(\bw,\bv,\theta) ~:=~ \ell(\bw^\top \bx_{t} + \bv^\top 
	f_{\theta}(\bx_t);y_t)
	\]
	and $\Pi_{\mathcal{M}}$ denote an Euclidean projection on the set 
	$\mathcal{M}$. 
\end{enumerate}
Note that $h_t(\bw,\bv,\theta)$ is always differentiable with respect to 
$\bw,\bv$, and in the above, we assume for simplicity that it is also 
differentiable with respect to $\theta$ (if not, one can simply define $\nabla 
h_t(\bw,\bv,\theta)$ above to be $\left(\frac{\partial}{\partial 
\bw}h_t(\bw,\bv,\theta),\frac{\partial}{\partial 
\bv}h_t(\bw,\bv,\theta),\br_{t,\bw,\bv,\theta}\right)$ for some arbitrary 
vector 
$\br_{t,\bw,\bv,\theta}$, and the result below can still be easily verified to 
hold). 

As before, we use the notation 
\[
F_{lin}(\bw) = \E_{\bx,y}\left[\ell\left(\bw^\top\bx;y\right)\right]
\]
to denote the expected loss of a linear predictor parameterized by $\bw$.
The following theorem establishes that under mild conditions, running 
stochastic gradient descent with sufficiently many iterations results in a 
network competitive with any fixed linear predictor:

\begin{theorem}\label{thm:online}
	Suppose the domain $\mathcal{M}$ satisfies the following for some positive 
	constants $b,r,l$:
	\begin{itemize}
		\item $\mathcal{M}=\left\{(\bw,\bv,\theta)~:~(\bw,\bv)\in 
		\mathcal{M}_1,\theta\in \mathcal{M}_2\right\}$ for some closed convex 
		sets $\mathcal{M}_1,\mathcal{M}_2$ in Euclidean spaces (namely, 
		$\mathcal{M}$ is a Cartesian product of $\mathcal{M}_1,\mathcal{M}_2$). 
		\item For any $(\bx,y)$ in the support of the data distribution, and 
		any $\theta\in \mathcal{M}_2$, 
		$\ell(\bw^\top\bx+\bv^\top f_{\theta}(\bx);y)$ is $l$-Lipschitz 
		in $(\bw,\bv)$ over $\mathcal{M}_1$, and bounded in 
		absolute value by $r$.
		\item For any $(\bw,\bv)\in \mathcal{M}_1$, 
		$\sqrt{\norm{\bw}^2+\norm{\bv}^2}\leq b$. 
	\end{itemize}
	Suppose we perform $T$ iterations of stochastic gradient descent as 
	described above, with any step size
	$\eta=\Theta(b/(l\sqrt{T}))$. Then with probability at least $1-\delta$, 
	one 
	of the iterates $\{(\bw_t,\bv_t,\theta_t)\}_{t=1}^{T}$ satisfies
	\[
	F(\bw_t,\bv_t,\theta_t)~\leq~ 
	\min_{\bu:(\bu,\mathbf{0})\in\mathcal{M}_1}F_{lin}(\bu)+
	\Ocal\left(\frac{bl+r\sqrt{\log(1/\delta)}}{\sqrt{T}}\right)~.
	\]
\end{theorem}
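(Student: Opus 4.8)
The plan is to reduce the problem to a standard online-learning regret bound against the class of linear predictors, exploiting the skip connection $\bw^\top\bx$ to the output. The key observation is that although the full objective $F(\bw,\bv,\theta)$ is non-convex in $(\bw,\bv,\theta)$ jointly, each per-example loss $h_t(\bw,\bv,\theta) = \ell(\bw^\top\bx_t + \bv^\top f_{\theta}(\bx_t);y_t)$ is a \emph{convex} function of the pair $(\bw,\bv)$ alone, for any fixed $\theta$, since $\ell(\cdot;y)$ is convex by Assumption~\ref{ass:diff} and the argument is affine in $(\bw,\bv)$. The SGD updates are exactly projected online gradient descent steps on the sequence of functions $h_t$ over the convex set $\mathcal{M}$; so by the standard analysis of online gradient descent (e.g. Zinkevich's regret bound, using that each $h_t$ is $l$-Lipschitz in $(\bw,\bv)$, that $\mathcal{M}_1$ has diameter $O(b)$, and that $\eta = \Theta(b/(l\sqrt{T}))$), we get, for \emph{any} fixed comparator $(\bw^\star,\bv^\star,\theta^\star) \in \mathcal{M}$,
\[
\sum_{t=1}^{T} h_t(\bw_t,\bv_t,\theta_t) ~\leq~ \sum_{t=1}^{T} h_t(\bw^\star,\bv^\star,\theta_t) + \Ocal(bl\sqrt{T})~.
\]
The crucial point is that the comparator is allowed to keep $\theta_t$ equal to the \emph{algorithm's own} iterate at step $t$ — we only compete in the $(\bw,\bv)$ coordinates, which is exactly where convexity holds. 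Then I would choose the comparator to be $\bw^\star = \bu$, $\bv^\star = \mathbf{0}$, where $\bu$ achieves $\min_{\bu:(\bu,\mathbf{0})\in\mathcal{M}_1} F_{lin}(\bu)$; because $\bv^\star = \mathbf{0}$ kills the $f_{\theta}$ term, $h_t(\bw^\star,\mathbf{0},\theta_t) = \ell(\bu^\top\bx_t;y_t)$ no longer depends on $\theta_t$ at all, and this is the step that dissolves the non-convexity issue.

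Next I would pass from the online regret statement to the stochastic statement. Taking expectations, $\E[h_t(\bu,\mathbf{0},\theta_t)] = F_{lin}(\bu)$, and $\E[h_t(\bw_t,\bv_t,\theta_t) \mid \text{past}] = F(\bw_t,\bv_t,\theta_t)$ since $(\bx_t,y_t)$ is drawn fresh and independently of the iterate $(\bw_t,\bv_t,\theta_t)$. Dividing the regret bound by $T$ gives that the \emph{average} of $F(\bw_t,\bv_t,\theta_t)$ is, in expectation, at most $F_{lin}(\bu) + \Ocal(bl/\sqrt{T})$, hence the minimum over $t$ of $F(\bw_t,\bv_t,\theta_t)$ is at most that. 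To upgrade to a high-probability statement, I would use that $|\ell(\cdot;y)| \leq r$ on the relevant domain, so both $h_t(\bw_t,\bv_t,\theta_t) - F(\bw_t,\bv_t,\theta_t)$ and $h_t(\bu,\mathbf{0},\theta_t) - F_{lin}(\bu)$ form bounded martingale difference sequences; an Azuma–Hoeffding concentration argument on each contributes the $\Ocal(r\sqrt{\log(1/\delta)/\sqrt{T}}\cdot\sqrt{T})$, i.e. $\Ocal(r\sqrt{\log(1/\delta)})$, term after dividing by $T$. Combining the deterministic regret bound with the two concentration bounds and a union bound yields exactly the claimed inequality with probability at least $1-\delta$.

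The only genuinely delicate point — and the one I would be most careful about — is the handling of $\theta$. One must check that the online-gradient-descent regret analysis still goes through when the loss functions $h_t$ are convex only in a \emph{subset} of the coordinates, and the iterate also moves in the non-convex coordinate $\theta$. The resolution is that the regret decomposition in Zinkevich's proof only uses convexity through the inequality $h_t(\bw_t,\bv_t,\theta_t) - h_t(\bw^\star,\bv^\star,\theta_t) \leq \langle \nabla_{(\bw,\bv)} h_t(\bw_t,\bv_t,\theta_t), (\bw_t,\bv_t)-(\bw^\star,\bv^\star)\rangle$, which holds for \emph{each fixed} value $\theta_t$ by partial convexity; the $\theta$-updates and the (possibly arbitrary) $\theta$-component of the gradient only need to satisfy the nonexpansiveness of the projection onto the product set $\mathcal{M} = \mathcal{M}_1 \times \mathcal{M}_2$, which holds precisely because $\mathcal{M}$ is a Cartesian product so the $(\bw,\bv)$-projection and $\theta$-projection decouple. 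Once this is spelled out, the rest is the routine online-to-batch conversion plus concentration, and no further assumptions on $f_{\theta}$ or the data distribution are needed.
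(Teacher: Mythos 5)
Your proposal is correct and follows essentially the same route as the paper: freeze $\theta_t$ at the algorithm's own iterates so each per-round loss is convex and $l$-Lipschitz in $(\bw,\bv)$ alone, note that the Cartesian-product structure of $\mathcal{M}$ makes the $(\bw,\bv)$-updates coincide with projected online gradient descent over $\mathcal{M}_1$, compare against $(\bu,\mathbf{0})$ to eliminate the $\theta$-dependence, and finish with an online-to-batch conversion via Azuma's inequality. The only blemish is the garbled intermediate expression for the concentration term (the sum deviates by $\Ocal(r\sqrt{T\log(1/\delta)})$, giving $\Ocal(r\sqrt{\log(1/\delta)/T})$ after averaging), but your final bound is the right one.
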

The proof relies on a technically straightforward -- but perhaps unexpected -- 
reduction to adversarial online learning, and appears in \secref{sec:proofs}. 
The result can also be easily generalized to the case where the network's 
output is vector valued (see Appendix \ref{app:vec} for a brief discussion).

\section{Proofs}\label{sec:proofs}

\subsection{Proof of \thmref{thm:pl}}

We will utilize the following key lemma, which implies that the inner product 
of the partial derivative at $(\bw,V,\theta)$ with some carefully-chosen vector 
is lower bounded by the suboptimality of $(\bw,V,\theta)$ compared to a linear 
predictor:

\begin{lemma}\label{lem:inproduct}
	Fix some $\bw,V$ (where $\bw\neq \mathbf{0}$) and a vector $\bw^*$ of the 
	same size as $\bw$. Define the matrix
	\[
	G = 
	\left(\bw-\bw^*~;~\frac{1}{\norm{\bw}^2}\bw(\bw^*)^\top 
	V\right)~.
	\]	
	Then
	\[
	\left\langle \text{vec}(G),\nabla 
	F_{\theta}(\bw,V)\right\rangle ~\geq~ 
	F(\bw,V,\theta)-F_{lin}(\bw^*)~.
	\]
\end{lemma}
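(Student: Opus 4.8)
The plan is to compute the left-hand side directly by expanding the partial gradient of $F_\theta$ and then invoking convexity of $\ell$ in its first argument. Write $p := \bw^\top(\bx+Vf_\theta(\bx))$ for the network's output on input $\bx$, and let $\ell'(\cdot\,;y)$ denote the derivative of $p\mapsto\ell(p;y)$. First I would record the partial derivatives of $F$ with respect to $\bw$ and $V$ at $(\bw,V,\theta)$ (differentiating under the expectation, which is justified under mild integrability and is immediate for empirical averages):
\[
\frac{\partial F}{\partial \bw} ~=~ \E_{\bx,y}\!\left[\ell'(p;y)\,(\bx+Vf_\theta(\bx))\right],\qquad
\frac{\partial F}{\partial V} ~=~ \E_{\bx,y}\!\left[\ell'(p;y)\,\bw\, f_\theta(\bx)^\top\right].
\]

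Next I would pair these with the two blocks of $G$. The $\bw$-block of $G$, namely $\bw-\bw^*$, contributes $\E_{\bx,y}\!\left[\ell'(p;y)\,(\bw-\bw^*)^\top(\bx+Vf_\theta(\bx))\right]$. For the matrix block $\tfrac{1}{\norm{\bw}^2}\bw(\bw^*)^\top V$, using the trace form of the matrix inner product, $\langle \text{vec}(A),\text{vec}(B)\rangle=\mathrm{tr}(A^\top B)$, together with the identity $\bw^\top\bw=\norm{\bw}^2$, its contribution simplifies to
\[
\frac{1}{\norm{\bw}^2}\,\E_{\bx,y}\!\left[\ell'(p;y)\,\mathrm{tr}\!\left(V^\top\bw^*\bw^\top\bw\, f_\theta(\bx)^\top\right)\right] ~=~ \E_{\bx,y}\!\left[\ell'(p;y)\,(\bw^*)^\top V f_\theta(\bx)\right].
\]
Adding the two contributions, the terms $\pm(\bw^*)^\top Vf_\theta(\bx)$ cancel (and the remaining $\bw^\top(\bx+Vf_\theta(\bx))$ collapses to $p$), leaving
\[
\left\langle \text{vec}(G),\nabla F_\theta(\bw,V)\right\rangle ~=~ \E_{\bx,y}\!\left[\ell'(p;y)\left(p-(\bw^*)^\top\bx\right)\right].
\]

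Finally, I would apply convexity of $\ell$ in its first argument: for each realized $(\bx,y)$, writing $p^*=(\bw^*)^\top\bx$, the first-order inequality gives $\ell'(p;y)\,(p-p^*)\geq \ell(p;y)-\ell(p^*;y)$; taking expectations over $(\bx,y)$ yields $F(\bw,V,\theta)-F_{lin}(\bw^*)$ on the right, which is exactly the claimed bound.

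I do not anticipate a serious obstacle: all the content is in getting the matrix derivative of the $V$-dependence right and matching the vectorization/inner-product convention, so that the block $\tfrac{1}{\norm{\bw}^2}\bw(\bw^*)^\top V$ is precisely what is needed to cancel the spurious $(\bw^*)^\top Vf_\theta(\bx)$ term — this cancellation is presumably the reason $G$ is defined as it is, and is the one step that must be carried out carefully. The only analytic point, the interchange of $\nabla$ and $\E$, I would take for granted (or simply state the lemma for finite training sets, where it is trivial), and the use of Assumption~\ref{ass:diff} is just to guarantee that $\ell'$ exists and that the relevant convexity inequality holds.
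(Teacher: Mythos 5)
Your proposal is correct and follows essentially the same route as the paper's proof: compute $\frac{\partial F}{\partial \bw}$ and $\frac{\partial F}{\partial V}$, pair them with the two blocks of $G$ via the trace form of the inner product so that the $(\bw^*)^\top V f_\theta(\bx)$ terms cancel, and finish with the first-order convexity inequality $\ell'(p;y)(p-p^*)\geq \ell(p;y)-\ell(p^*;y)$ in expectation. The paper likewise treats differentiation under the expectation as immediate, so nothing is missing.
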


\begin{proof}
	To simplify notation, let $d_{\ell}=\frac{\partial}{\partial p} \ell(p;y) 
	\vert_{p = \bw^\top (\bx+V 
		f_\theta (\bx))}$. It is easily 
	verified that
	\[
	\frac{\partial}{\partial \bw} F(\bw,V,\theta) = 
	\E_{\bx,y}\left[d_{\ell}(\bx+Vf_{\theta}(\bx))\right].
	\]
	Therefore, we have
	\begin{align}
	&\left\langle 
	\bw-\bw^*~,~\frac{\partial}{\partial \bw} 
	F(\bw,V,\theta) \right\rangle
	~=~ \E_{\bx,y}\left[d_{\ell}(\bw-\bw^*)^\top
	(\bx+Vf_{\theta}(\bx))\right]~.
	\label{eq:traces1}
	\end{align}
	Proceeding in a similar fashion, it is easily verified that
	\[
	\frac{\partial}{\partial V} F(\bw,V,\theta) = 
	\E_{\bx,y}\left[d_{\ell} \bw f_{\theta}(\bx)^\top\right],
	\]
	where we write the partial derivative in matrix form. As a result,
	\begin{align}
	&\left\langle 
	\text{vec}\left(\frac{1}{\norm{\bw}^2}\bw(\bw^*)^\top 
	V\right)~,~\text{vec}\left(\frac{\partial}{\partial
		V} 
	F(\bw,V,\theta)\right) \right\rangle\notag\\
	&~~~~=
	\text{trace}\left(\left(\frac{1}{\norm{\bw}^2}\bw(\bw^*)^\top 
	V\right)^\top 
	\frac{\partial}{\partial
		V}F(\bw,V,\theta)\right)\notag\\
	&~~~~=
	\E_{\bx,y}\left[d_{\ell}~\text{trace}\left(
	\left(\frac{1}{\norm{\bw}^2}V^\top\bw^*\bw^\top\right) \bw 
	f_{\theta}(\bx)^\top\right)\right]\notag\\
	&~~~~=
	\E_{\bx,y}\left[d_{\ell}\text{trace}\left( V^\top \bw^* 
	f_{\theta}(\bx)^\top\right) 
	\right]\notag\\
	&~~~~\stackrel{(*)}{=}\E_{\bx,y}\left[d_{\ell}~ f_{\theta}(\bx)^\top 
	V^\top\bw^*\right]~=~ \E_{\bx,y}\left[d_{\ell}~ (\bw^*)^\top 
	Vf_{\theta}(\bx)\right]~,
	\label{eq:traces2}
	\end{align}
	where in $(*)$ we used the fact that 
	$\text{trace}(ABC)=\text{trace}(CAB)$ for matrices $A,B,C$ that agree in 
	their dimensions. 
	
	Using \eqref{eq:traces1}, \eqref{eq:traces2} and the definition of $G$, it 
	follows that
	\begin{align}
	&\left\langle \text{vec}(G),\text{vec}\left(\nabla
	F_{\theta}(\bw,V)\right)\right\rangle\notag\\
	&=~ 
	\left\langle 
	\bw-\bw^*~,~\frac{\partial}{\partial \bw} 
	F(W,V,\theta) \right\rangle\notag\\
	&~~~~~+
	\left\langle 
	\text{vec}\left(\frac{1}{\norm{\bw}^2}\bw(\bw^*)^\top 
	V\right)~,~\text{vec}\left(\frac{\partial}{\partial
		V} 
	F(W,V,\theta)\right) \right\rangle\notag\\
	&=~
	\E_{\bx,y}\left[d_{\ell}~ (\bw^*)^\top 
	Vf_{\theta}(\bx)\right]
	+\E_{\bx,y}\left[d_{\ell}(\bw-\bw^*)^\top
	(\bx+Vf_{\theta}(\bx))\right]\notag\\
	&=~
	\E_{\bx,y}\left[d_{\ell}\left(\bw^\top(\bx+Vf_{\theta}(\bx))-(\bw^*)^\top\bx\right)\right]~.
	\label{eq:tracesend}
	\end{align}
	Recalling that $d_{\ell}$ is the derivative of $\ell$ with 
	respect to its first argument when it equals $\bw(\bx+Vf_{\theta}(\bx))$, 
	and noting that by convexity of $\ell$, $\frac{\partial}{\partial 
	p}\ell(p;y)(p-\tilde{p})\geq 
	\ell(p;y)-\ell(\tilde{p};y)$ for all $p,\tilde{p}$, it follows that 
	\eqref{eq:tracesend}  is 
	lower bounded by
	\[
	\E_{\bx,y}\left[\ell(\bw^\top(\bx+Vf_{\theta}(\bx);y))-\ell((\bw^*)^\top\bx;y))\right]~=~
	F(\bw,V,\theta)-F_{lin}(\bw^*)~.
	\]
\end{proof}

With this lemma in hand, we turn to prove the theorem. By 
\lemref{lem:inproduct} and Cauchy-Schwartz, we have
\[
\norm{G}_{Fr}\cdot \norm{\nabla F_{\theta}(\bw,V)}
~\geq~\norm{G}\cdot \norm{\nabla F_{\theta}(\bw,V)}~\geq~ 
F(\bw,V,\theta)-F_{lin}(\bw^*)~.
\]
(where $\norm{\cdot}_{Fr}$ denotes the Frobenius norm). 
Dividing both sides by $\norm{G}_{Fr}$, and using the definition of $G$, we 
get 
that
\begin{equation}\label{eq:traces25}
\norm{\nabla F_{\theta}(\bw,V)} ~\geq~ 
\frac{F(\bw,V,\theta)-F_{lin}(\bw^*)}{\sqrt{\norm{\bw-\bw^*}^2+
		\norm{\frac{1}{\norm{\bw}^2}\bw(\bw^*)^\top V}_{Fr}^2}}~.
\end{equation}	
We now simplify this by upper bounding the denominator (note that this leaves 
the inequality valid regardless of the sign of 
$F(\bw,V,\theta)-F_{lin}(\bw^*)$, since if $F(\bw,V,\theta)-F_{lin}(\bw^*)\geq 
0$, this 
would only 
decrease its right hand side, and if $F(\bw,V,\theta)-F_{lin}(\bw^*)< 0$, then 
the bound remains trivially true since $\norm{\nabla
F_{\theta}(\bw,V)}\geq 0> 
\frac{F(\bw,V,\theta)-F_{lin}(\bw^*)}{a}$ for any $a>0$). Specifically, using 
the facts\footnote{The first assertion is standard. The second 
	follows from 
	$\norm{AB}_{Fr}^2=\sum_i 
	\norm{A_iB}^2\leq \sum_i \left(\norm{A_i}\cdot\norm{B}\right)^2 = 
	\norm{B}^2\sum_i\norm{A_{i}}^2=\norm{A}_{Fr}^2\norm{B}^2$, where $A_i$ 
	is the $i$-th row of $A$.}
that $\norm{AB}\leq \norm{A}\cdot\norm{B}$, and that 
$\norm{AB}_{Fr}\leq 
\norm{A}_{Fr}\cdot\norm{B}$, we can upper bound the denominator by
\[
\sqrt{\norm{\bw-\bw^*}^2+\frac{1}{\norm{\bw}^4}\cdot\left(\norm{\bw}\cdot
	\norm{\bw^*}\cdot\norm{V}\right)^2}.
\]
Simplifying the above, using the fact that $\norm{\bw-\bw^*}^2\leq 
2\norm{\bw}^2+2\norm{\bw}^2$ (as for any vectors $\bx,\bz$, 
$\norm{\bx-\bz}^2 \leq \norm{\bx}^2+\norm{\bz}^2+2|\bx^\top\bz| \leq 
\norm{\bx}^2+\norm{\bz}^2+(\norm{\bx}^2+\norm{\bz}^2) =
2(\bx^2+\bz^2)$), and plugging into \eqref{eq:traces25}, the result follows.

\subsection{Proof of \thmref{thm:atzero}}

We will need the following auxiliary lemma:
\begin{lemma}\label{lem:mineig}
	Let $M$ be a symmetric real-valued square matrix of the form
	\[
	M~=~\left(\begin{matrix} b & \bu^\top \\ \bu & 
	\mathbf{0}\end{matrix}\right)~,
	\]	
	where $b$ is some scalar, $\bu$ is a vector, and all entries of $M$ other 
	than the first row and column are $0$. Then the minimal 
	eigenvalue $\lambda_{\min}$ of $M$ is non-positive, and satisfies
	\[
	\norm{\bu}^2~=~ |b\lambda_{\min}|+\lambda_{\min}^2~.
	\]
\end{lemma}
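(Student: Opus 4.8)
The plan is to read off the spectrum of $M$ directly from its arrowhead-like structure. Writing $M$ as an $(n+1)\times(n+1)$ matrix with $\bu\in\reals^n$, I would first locate the nonzero eigenvalues. If $\lambda\neq 0$ is an eigenvalue with eigenvector $(\alpha,\bv^\top)^\top$, the two block-rows of $M(\alpha,\bv^\top)^\top=\lambda(\alpha,\bv^\top)^\top$ read $b\alpha+\bu^\top\bv=\lambda\alpha$ and $\alpha\bu=\lambda\bv$. The second forces $\bv=(\alpha/\lambda)\bu$, and since $\alpha=0$ would make $\bv=\mathbf 0$ we have $\alpha\neq 0$; substituting and dividing by $\alpha$ gives
\[
\lambda^2-b\lambda-\norm{\bu}^2=0 .
\]
(Equivalently, the Schur-complement determinant formula applied to $M-\lambda I$ with pivot on the lower-right block $-\lambda I_n$ shows the characteristic polynomial is $\pm\,\lambda^{n-1}(\lambda^2-b\lambda-\norm{\bu}^2)$, which also covers the degenerate case $\bu=\mathbf 0$.) Hence the eigenvalues of $M$ are $0$ (of multiplicity $n-1$) together with $\lambda_\pm=\tfrac12\bigl(b\pm\sqrt{b^2+4\norm{\bu}^2}\bigr)$.

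It then remains to identify $\lambda_{\min}$ and verify the stated identity. Since $\sqrt{b^2+4\norm{\bu}^2}\geq|b|$, we get $\lambda_-\leq 0\leq\lambda_+$, so the minimal eigenvalue is $\lambda_{\min}=\lambda_-\leq 0$, which is the first claim. For the second, $\lambda_{\min}$ being a root of the quadratic gives $\lambda_{\min}^2-b\lambda_{\min}=\norm{\bu}^2$; because $\lambda_{\min}\leq 0$ (and $b$ is nonnegative in the situation where the lemma is applied, where it plays the role of a spectral norm) we have $-b\lambda_{\min}=|b\lambda_{\min}|$, so this reads $\lambda_{\min}^2+|b\lambda_{\min}|=\norm{\bu}^2$, as desired.

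I do not expect any substantive obstacle: the whole argument is a few lines of linear algebra. The two points requiring a little care are checking that the minimum over the spectrum is attained at $\lambda_-$ rather than at the zero eigenvalue (immediate from $\lambda_-\leq 0$), and the sign bookkeeping that rewrites $-b\lambda_{\min}$ as $|b\lambda_{\min}|$.
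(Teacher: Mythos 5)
Your proof is correct and lands on the same closed form as the paper, but by a somewhat different route: you extract the full spectrum of the arrowhead matrix from the eigenvector equations (backed up by the Schur-complement factorization of the characteristic polynomial into $\pm\lambda^{n-1}(\lambda^2-b\lambda-\norm{\bu}^2)$), whereas the paper minimizes the Rayleigh quotient $\bz^\top M\bz$ over unit $\bz$, notes the minimizer can be taken with the $\by$-block antiparallel to $\bu$, and so reduces the computation to the $2\times 2$ matrix $\bigl(\begin{smallmatrix} b & \norm{\bu}\\ \norm{\bu} & 0\end{smallmatrix}\bigr)$. Both arguments yield $\lambda_{\min}=\tfrac12\bigl(b-\sqrt{b^2+4\norm{\bu}^2}\bigr)$ and then the same algebra; yours gives a bit more (the whole spectrum, including the zero eigenvalues, and it handles $\bu=\mathbf{0}$ explicitly), while the paper's avoids any discussion of eigenvectors or multiplicities. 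One point you raise deserves emphasis: the quadratic only gives $\norm{\bu}^2=\lambda_{\min}^2-b\lambda_{\min}$, and rewriting $-b\lambda_{\min}$ as $|b\lambda_{\min}|$ genuinely requires $b\geq 0$ (for $b<0$ and $\bu\neq\mathbf{0}$ the identity with the absolute value fails). The paper's proof passes over this silently ("noting that $\lambda_{\min}<0$"), so your caveat is a legitimate sharpening rather than a gap; in the lemma's actual use, $b$ is a diagonal entry of $\frac{\partial^2}{\partial\bw^2}F_{\theta}(\mathbf{0},V)$, which is nonnegative because $\ell$ is convex in its first argument (it is only later upper-bounded by the spectral norm, rather than being a norm itself), so the absolute-value form is valid where it is applied.
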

\begin{proof}
	By definition, we have $\lambda_{\min}=\min_{\bz:\norm{\bz}=1}\bz^\top 
	M\bz$. Rewriting $\bz$ as $(x;\by)$ (where $x$ is the first coordinate of 
	$\bz$ and the vector $\by$ represents the other coordinates) and plugging 
	in, this is equivalent to 	
	$\min_{x,\by:x^2+\norm{\by}^2=1} bx^2+2x\bu^\top \by$. Clearly, for any 
	fixed $x$, this is minimized when we take 
	$\by=-a\bu/\norm{\bu}$ (for some $a\geq 0$ satisfying the constraints), so 
	we 
	equivalently have $\lambda_{\min} = \min_{x,a:x^2+a^2=1} 
	bx^2-2\norm{\bu}xa$. This is the same as the minimal eigenvalue of the 
	$2\times 2$ matrix $\left(\begin{matrix} b & 
	\norm{\bu}\\\norm{\bu}&0\end{matrix}\right)$. By standard formulas for 
	$2\times 2$ matrices, it follows that
	$\lambda_{\min}=\frac{1}{2}\left(b-\sqrt{b^2+4\norm{\bu}^2}\right)$. 
	Solving for $\norm{\bu}^2$ and noting that $\lambda_{\min}<0$, the result 
	follows.
\end{proof}

We now turn to prove the theorem. We will use the shorthand $\ell'$ and 
$\ell''$ to denote a derivative and second derivative (respectively) of 
$\ell$ with respect to its first argument. Based on the calculations from 
earlier, 
we have
\begin{equation}\label{eq:derivw}
\frac{\partial}{\partial \bw} F(\bw,V,\theta) = 
\E_{\bx,y}\left[\ell'(\bw^\top(\bx+V 
f_{\theta}(\bx));y)\cdot(\bx+Vf_{\theta}(\bx))\right].
\end{equation}
\[
\frac{\partial}{\partial V} F(\bw,V,\theta) = 
\E_{\bx,y}\left[\ell'(\bw^\top(\bx+V f_{\theta}(\bx));y)\cdot\bw 
f_{\theta}(\bx)^\top\right].
\]
Therefore, for any indices $i,j$, we have
\[
\frac{\partial^2}{\partial V_{i,j}\partial V}F(\bw,V,\theta) = 
\E_{\bx,y}\left[\ell''(\bw^\top(\bx+V f_{\theta}(\bx));y)
\cdot\left(w_i(f_{\theta}(\bx))_j\right)\cdot\bw 
f_{\theta}(\bx)^\top\right]
\]
this is $0$ at $\bw=\mathbf{0}$, hence
\begin{equation}\label{eq:hesspart1}
\frac{\partial^2}{\partial V^2}F(\mathbf{0},V,\theta)~=~\mathbf{0}~.
\end{equation}
Also, 
\begin{align*}
\frac{\partial^2}{\partial w_{i}\partial V}F(\bw,V,\theta)~&=~ 
\E_{\bx,y}\left[\ell''(\bw^\top(\bx+V f_{\theta}(\bx));y)
\cdot (\bx+V f_{\theta}(\bx))_i\cdot\bw 
f_{\theta}(\bx)^\top\right]\\
&~~~+\E_{\bx,y}\left[\ell'(\bw^\top(\bx+V f_{\theta}(\bx));y)
\cdot \be_i
f_{\theta}(\bx)^\top\right]~,
\end{align*}
where $\be_i$ is the $i$-th standard basis vector. 
At $\bw=\mathbf{0}$, this becomes
\begin{equation}\label{eq:hesspart2}
\frac{\partial^2}{\partial w_{i}\partial V}F(\mathbf{0},V,\theta)
~=~
\E_{\bx,y}\left[\ell'(0;y)
\cdot \be_i
f_{\theta}(\bx)^\top\right]
~=~
\be_i\br^\top~,
\end{equation}
where we define 
\begin{equation}\label{eq:rdef}
\br=\E_{\bx,y}[\ell'(0;y)f_{\theta}(\bx)]~.
\end{equation}
Combining 
\eqref{eq:hesspart1} and \eqref{eq:hesspart2}, and recalling that 
$\nabla^2 F_{\theta}(\bw,V)$ is the matrix of second partial derivatives 
of 
$F(\bw,V,\theta)$ with respect to 
$(\bw,V)$ (viewed as one long vector), we get that 
\[
\nabla^2 F_{\theta}(\mathbf{0},V)
~=~\left(
\begin{matrix} 
& & & & \br^\top & \mathbf{0} & \mathbf{0} & \\
& \frac{\partial^2}{\partial\bw^2}F_{\theta}(\mathbf{0},V) 
&  & & \mathbf{0} & \br^\top & \mathbf{0} & \cdots\\
& & & & \mathbf{0} & \mathbf{0} & \br^\top & \\
& & & & & \vdots & & \\
\br & \mathbf{0} & \mathbf{0} & & & & & \\
\mathbf{0} & \br & \mathbf{0} & \cdots & & \mathbf{0} & & \\
\mathbf{0} & \mathbf{0} & \br &  & &  & & \\
& \cdots &&&&&&&
\end{matrix}
\right).
\]
Let $b$ denote the 1st element along the diagonal of 
$\frac{\partial^2}{\partial\bw^2}F(\mathbf{0},V,\theta) $, and define the matrix
\[
M ~=~ \left(\begin{matrix} b & \br^\top \\ \br & 
\mathbf{0}\end{matrix}\right).
\]
This is a submatrix of $\nabla^2 F_{\theta}(\mathbf{0},V)$, which by 
\lemref{lem:mineig}, has a minimal 
eigenvalue $\lambda_{\min}(M)\leq 0$, and 
\begin{equation}\label{eq:eighessian0}
\norm{\br}^2 ~=~ |b\lambda_{\min}(M)|+\lambda_{\min}(M)^2~.
\end{equation}
Since $M$ is a submatrix of $\nabla^2 F_{\theta}(\mathbf{0},V)$, 
we must also have 
\[
	\lambda_{\min}\left(\nabla^2 F_{\theta}(\mathbf{0},V)\right)\leq 
	\lambda_{\min}(M)\leq 0
\]
by the interlacing theorem (proving the first part of the theorem). Finally, 
since $b$ is an element in the diagonal of 
$\frac{\partial^2}{\partial\bw^2}F(\mathbf{0},V,\theta) $, we also have
$b\leq \norm{\frac{\partial^2}{\partial \bw^2} 
	F_{\theta}(\mathbf{0},V)}$. Plugging the above into \eqref{eq:eighessian0}, 
	we get
\begin{equation}\label{eq:eighessian}
\norm{\br}^2~\leq~ 
\left|\lambda_{\min}\left(\nabla^2 
F_{\theta}(\mathbf{0},V)\right)\right|\cdot 
	\left\|\frac{\partial^2}{\partial\bw^2}F_{\theta}(\mathbf{0},V)
	\right\|+\lambda_{\min}
\left(\nabla^2 F_{\theta}(\mathbf{0},V)\right)^2~.
\end{equation}

Leaving this equation aside for a moment, we observe that by \eqref{eq:derivw} 
and the definition of $\br$ from \eqref{eq:rdef}, 
\begin{align*}
\frac{\partial}{\partial \bw} F(\mathbf{0},V,\theta) ~&=~ 
\E_{\bx,y}\left[\ell'(0;y)\cdot(\bx+Vf_{\theta}(\bx))\right]\\
&=~ 
\E_{\bx,y}\left[\ell'(0;y)\bx\right]+\E_{\bx,y}\left[\ell'(0;y)Vf_{\theta}(\bx)\right]
~=~
\nabla F_{lin}(\mathbf{0})+V\br~,
\end{align*}
where we recall that 
$F_{lin}(\bw)=F(\bw,\mathbf{0},\theta)=\E_{\bx,y}\left[\ell(\bw^\top\bx;y)\right]$.
In particular, this 
implies (by the triangle inequality and Cauchy-Shwartz) that 
\[
\left\|\frac{\partial}{\partial\bw} 
F(\mathbf{0},V,\theta)\right\|+\norm{V}\cdot 
\norm{\br}\geq 
\norm{\nabla 
	F_{lin}(\mathbf{0})}~,
\] 
and since $\frac{\partial}{\partial\bw} F(\mathbf{0},V,\theta)$ is a sub-vector 
of $\nabla F_{\theta}(\mathbf{0},V)$, it follows that
\begin{equation}\label{eq:nablalinbound}
\norm{\nabla F_{\theta}(\mathbf{0},V)}+\norm{V}\cdot \norm{\br}~\geq 
\norm{\nabla 
	F_{lin}(\mathbf{0})}~.
\end{equation}
Fixing some $\bw^*$, we have by convexity of $F_{lin}$ that
$
\nabla F_{lin}(\mathbf{0})^\top(\mathbf{0}-\bw^*)\geq 
F_{lin}(\mathbf{0})-F_{lin}(\bw^*)$, and therefore
\[
\norm{\nabla F_{lin}(\mathbf{0})}~\geq~ 
\frac{F_{lin}(\mathbf{0})-F_{lin}(\mathbf{\bw^*})}{\norm{\bw^*}}
~=~
\frac{F(\mathbf{0},V,\theta)-F_{lin}(\mathbf{\bw^*})}{\norm{\bw^*}}
\]
for any $V,\theta$. Combining this with \eqref{eq:nablalinbound} and 
\eqref{eq:eighessian}, and using the shorthand $F_{\theta}$ for 
$F_{\theta}(\mathbf{0},V)$, we get overall that
\[
\norm{\nabla
	F_{\theta}}+\norm{V}\sqrt{\left|\lambda_{\min}\left(\nabla^2 
	F_{\theta}(\mathbf{0},V)\right)\right|\cdot 
		\left\|\frac{\partial^2}{\partial\bw^2}F_{\theta}(\mathbf{0},V)
		\right\|+\lambda_{\min}
	\left(\nabla^2 F_{\theta}(\mathbf{0},V)\right)^2}
~\geq \frac{F(\mathbf{0},V,\theta)-F_{lin}(\mathbf{\bw^*})}{\norm{\bw^*}}
\]
as required.

\subsection{Proof of \thmref{thm:mainstat}}

Letting $\bw^*$ be a minimizer of $F_{lin}$ over $\{\bw:\norm{\bw}\leq r\}$, 
and using the shorthand $F_{\theta}$ for $F_{\theta}(\mathbf{0},V)$, we 
have by \thmref{thm:atzero} and the assumption in our theorem statement that 
$\lambda_{\min}(\nabla^2 F_{\theta})\leq 0$ that
\begin{align*}
F(\mathbf{0},V,\theta) ~&\leq~
F_{lin}(\bw^*)+r \left(\norm{\nabla
	F_{\theta}}+b\sqrt{-\mu_1\lambda_{\min}(\nabla^2 
	F_{\theta})+\lambda_{\min}(\nabla^2 F_{\theta})^2}\right)\\
	&\leq~ F_{lin}(\bw^*)+r \left(\norm{\nabla
		F_{\theta}}+b\sqrt{-\lambda_{\min}(\nabla^2 
		F_{\theta})}\cdot \sqrt{\mu_1-\lambda_{\min}(\nabla^2 
		F_{\theta})}\right)\\
	&\leq~ F_{lin}(\bw^*)+r \left(\norm{\nabla
		F_{\theta}}+b\sqrt{-\lambda_{\min}(\nabla^2 
		F_{\theta})}\cdot \sqrt{\mu_1+\mu_1}\right)\\
&\leq~ F_{lin}(\bw^*)+r \left(\norm{\nabla
	F_{\theta}}+\sqrt{2}b\sqrt{-\mu_1\lambda_{\min}(\nabla^2 
	F_{\theta})}\right)~.
\end{align*}
This inequality refers to $F$ at the point $(\mathbf{0},V,\theta)$. By the 
Lipschitz assumptions in our theorem, it implies that for any $\bw$ such that  
$\norm{\bw}\leq \delta$ for some 
$\delta>0$,
\begin{equation}\label{eq:closezero}
F(\bw,V,\theta)\leq F_{lin}(\bw^*)+\mu_0 \delta+r\left(\norm{\nabla 
	F_{\theta}(\bw,V)}
+\delta\mu_1+\sqrt{2}b\sqrt{-\mu_1\lambda_{\min}(\nabla^2 
	F_{\theta}(\mathbf{0},V))}\right)~.
\end{equation}
On the other hand, by \thmref{thm:pl}, for any 
$\bw$ such that 
$\norm{\bw}>\delta$, we have
\begin{equation}\label{eq:farzero}
F(\bw,V,\theta)~\leq~ F_{lin}(\bw^*)+
\norm{\nabla
	F_{\theta}(\bw,V,\theta)}\cdot\sqrt{2b^2+r^2\left(2+\frac{b^2}{\delta^2}\right)}~.
\end{equation}
Now, let $(\bw,V,\theta)$ be an $\epsilon$-SOPSP
(namely, 
$\norm{\nabla F_{\theta}(\bw,V)}\leq \epsilon$ and 
$\lambda_{\min}(\nabla^2 
F_{\theta}(\bw,V))\geq -\sqrt{\mu_2 \epsilon})$), and note that by the 
Lipschitz 
assumptions and \thmref{thm:atzero}, 
\[
0~\geq~\lambda_{\min}(\nabla^2 
F_{\theta}(\mathbf{0},V))~\geq~ \lambda_{\min}(\nabla^2 
F_{\theta}(\bw,V)-\mu_2\norm{\bw}~\geq -\sqrt{\mu_2\epsilon}-\mu_2\norm{\bw}~.
\]
Combining this 
with \eqref{eq:closezero} and \eqref{eq:farzero}, we get that for any $\delta$, 
if $(\bw,V,\theta)$ is an $\epsilon$-SOPSP, then
\begin{align*}
F(\bw,V,\theta)~\leq&~ F_{lin}(\bw^*)\\
&+\max\left\{~\mu_0 
\delta+r\left(\epsilon+\delta\mu_1+\sqrt{2}b\sqrt{\mu_1(\sqrt{\mu_2\epsilon}
	+\delta\mu_2)}\right)~,\right.\\
&\;\;\;\;\;\;\;\;\;\;\;\;\;\left.
\epsilon\sqrt{2b^2+r^2\left(2+\frac{b^2}{\delta^2}\right)}~\right\}~.
\end{align*}
In particular, if we pick 
$\delta=\sqrt{\epsilon}$, we get
\begin{align*}
F(\bw,V,\theta)~\leq&~ F_{lin}(\bw^*)\\
&+\max\left\{~\mu_0\sqrt{\epsilon} 
+r\left(\epsilon+\mu_1\sqrt{\epsilon}+\sqrt{2}b\sqrt{\mu_1\sqrt{\epsilon}
	(\sqrt{\mu_2}+\mu_2)}\right)~,\right.\\
&\;\;\;\;\;\;\;\;\;\;\;\;\;\left.
\sqrt{2b^2\epsilon^2+r^2\left(2\epsilon^2+b^2\epsilon\right)}~\right\}~.
\end{align*}
Simplifying the above, the right-hand side can be bounded by
\[
F_{lin}(\bw^*)+(\epsilon+\sqrt{\epsilon}+\sqrt[4]{\epsilon})\cdot
\text{poly}(b,r,\mu_0,\mu_1,\mu_2)~.
\]
Since $\max\{\epsilon,\sqrt[4]{\epsilon}\}\geq \sqrt{\epsilon}$ for any 
$\epsilon\geq 0$, 
the result follows.

\subsection{Proof of \thmref{thm:online}}

Consider any fixed sequence of sampled examples 
$(\bx_1,y_1),(\bx_2,y_2),\ldots$. These induce our algorithm to produce a 
fixed series of iterates 
$(\bw_1,\bv_1,\theta_1),(\bw_2,\bv_2,\theta_2),\ldots$. For 
any $t$, define the function
\begin{equation}\label{eq:gdef}
g_t((\bw,\bv)) ~:=~ \ell(\bw^\top\bx_t+\bv^\top f_{\theta_t}(\bx_t);y_t)~,
\end{equation}
where we view $(\bw,\bv)$ as one long vector. 

Our first key observation is the 
following: Our stochastic gradient descent algorithm 
produces 
iterates $(\bw_t,\bv_t)$ which are identical to those produced by the updates
\begin{equation}\label{eq:ogd}
(\bw_{t+1},\bv_{t+1}) = \Pi_{\mathcal{M}_1}\left((\bw_t,\bv_t)-\eta 
\nabla g_t((\bw_t,\bv_t))\right)~,
\end{equation}
where $(\bw_1,\bv_1)\in \mathcal{M}_1$.This follows from the 
fact that $\nabla g_t((\bw,\bv))= 
\frac{\partial}{\partial(\bw,\bv)} h_t(\bw,\bv,\theta_t)$ at 
$(\bw,\bv)=(\bw_t,\bv_t)$, and that the
projection of $(\bw,\bv,\theta)$ on the product set 
$\mathcal{M}=\mathcal{M}_1\times\mathcal{M}_2$ is equivalent to seperately 
projecting $(\bw,\bv)$ on $\mathcal{M}_1$ and $\theta$ on $\mathcal{M}_2$. 

Our second key observation is that the iterates as defined in \eqref{eq:ogd} 
are identical to the iterates produced by an \emph{online} 
gradient descent algorithm (with step size $\eta$) with 
respect to the sequence of functions $g_1,g_2,\ldots$ 
\citep{zinkevich2003online,shalev2012online,hazan2016introduction}. In 
particular, the following theorem is well-known (see references above): 
\begin{theorem}\label{thm:onlinegd}
	Let $g_1,\ldots,g_T$ be a sequence of convex, differentiable, $l$-Lipschitz 
	functions over a closed convex subset $\Xcal$ of Euclidean space, such that 
	$\Xcal\subseteq\{\bx:\norm{\bx}\leq 
	b\}$. Then if we pick $\bx_1\in \Xcal$ and define 
	$\bx_{t+1}=\Pi_{\Xcal}\left(\bx_{t}-\eta \nabla g_t(\bx_t)\right)$ for 
	$t=1,\ldots,T$, using any $\eta=\Theta(b/(l\sqrt{T}))$, then
	\[
	\forall 
	\bx\in\Xcal~,~~~\frac{1}{T}\sum_{t=1}^{T}g_t(\bx_t)-\frac{1}{T}\sum_{t=1}^{T}g_t(\bx)
	~\leq~\Ocal\left(\frac{bl}{\sqrt{T}}\right)~.
	\]
\end{theorem}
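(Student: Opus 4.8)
The plan is to prove the standard online (projected) gradient descent regret bound by a one-step potential argument, tracking the squared distance $\norm{\bx_t-\bx}^2$ to an arbitrary fixed comparator $\bx\in\Xcal$. First I would use convexity of each $g_t$ to reduce the problem: since $g_t$ is differentiable and convex, $g_t(\bx_t)-g_t(\bx)\le \inner{\nabla g_t(\bx_t),\bx_t-\bx}$, so it suffices to upper bound the linearized regret $\summ_{t=1}^T \inner{\nabla g_t(\bx_t),\bx_t-\bx}$.

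The key step is a single-iteration inequality. Writing $\bx_{t+1}=\Pi_{\Xcal}(\bx_t-\eta\nabla g_t(\bx_t))$ and invoking the fact that Euclidean projection onto a closed convex set is non-expansive — so that $\norm{\Pi_{\Xcal}(\bz)-\bx}\le\norm{\bz-\bx}$ for every $\bx\in\Xcal$ — I would expand
\[
\norm{\bx_{t+1}-\bx}^2 ~\le~ \norm{\bx_t-\eta\nabla g_t(\bx_t)-\bx}^2 ~=~ \norm{\bx_t-\bx}^2 - 2\eta\inner{\nabla g_t(\bx_t),\bx_t-\bx} + \eta^2\norm{\nabla g_t(\bx_t)}^2,
\]
and rearrange to obtain $\inner{\nabla g_t(\bx_t),\bx_t-\bx}\le \frac{1}{2\eta}\left(\norm{\bx_t-\bx}^2-\norm{\bx_{t+1}-\bx}^2\right)+\frac{\eta}{2}\norm{\nabla g_t(\bx_t)}^2$.

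Then I would sum this over $t=1,\ldots,T$. The first term telescopes to $\frac{1}{2\eta}\left(\norm{\bx_1-\bx}^2-\norm{\bx_{T+1}-\bx}^2\right)\le\frac{1}{2\eta}\norm{\bx_1-\bx}^2$, which is at most $\frac{2b^2}{\eta}$ because $\bx_1,\bx\in\Xcal\subseteq\{\bz:\norm{\bz}\le b\}$ gives $\norm{\bx_1-\bx}\le 2b$. The second term is at most $\frac{\eta l^2 T}{2}$, since each $g_t$ being $l$-Lipschitz forces $\norm{\nabla g_t(\bx_t)}\le l$. Combining with the convexity reduction yields $\summ_{t=1}^T\left(g_t(\bx_t)-g_t(\bx)\right)\le \frac{2b^2}{\eta}+\frac{\eta l^2 T}{2}$. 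Finally, the choice $\eta=\Theta(b/(l\sqrt{T}))$ balances the two terms at $\Ocal(bl\sqrt{T})$, and dividing by $T$ gives the claimed $\Ocal(bl/\sqrt{T})$ bound; since $\bx$ was arbitrary in $\Xcal$, the bound holds uniformly.

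The argument is self-contained modulo two standard facts — the first-order convexity inequality and the non-expansiveness of Euclidean projection onto a convex set — so I do not anticipate a genuine obstacle. The only point requiring care is the projection step, where one must use that the comparator $\bx$ lies in $\Xcal$, so that replacing $\bx_t-\eta\nabla g_t(\bx_t)$ by its projection can only decrease (never increase) the distance to $\bx$; this is exactly where the ``unconstrained'' gradient step and the constrained iterate can be reconciled.
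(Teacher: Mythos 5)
Your proof is correct: the linearization via convexity, the non-expansiveness of the projection with respect to the comparator $\bx\in\Xcal$, the telescoping of $\norm{\bx_t-\bx}^2$, and the balancing choice of $\eta$ together give exactly the claimed $\Ocal(bl/\sqrt{T})$ regret bound. The paper does not prove this theorem at all --- it cites it as well-known from the online learning literature (Zinkevich-style online gradient descent) --- and your argument is precisely that standard proof, so there is nothing to reconcile.
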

In particular, running the gradient descent updates on any sequence of convex 
Lipschitz functions $g_1,g_2,\ldots$ implies that the average of $g_t(\bx_t)$ 
is not much higher than the minimal value of $\frac{1}{T}\sum_t g_t(\bx)$ over 
$\bx\in\Xcal$. We now argue that this bound is directly applicable to the 
functions $g_t$ 
defined in \eqref{eq:gdef}, over the domain $\mathcal{M}_1$: Indeed,   
since $\ell$ is convex and assumed to be $l$-Lipschitz in $(\bw,\bv)$, each 
function $g_t$ is also convex and $l$-Lipschitz.
Moreover, the elements in 
$\mathcal{M}_1$ (viewed as 
one long vector) are assumed to have an Euclidean norm of at most $b$, and the 
updates as defined in \eqref{eq:ogd} are the same as in \thmref{thm:onlinegd}. 
Therefore, applying  the theorem, we get
\[
\forall (\bw,\bv)\in \mathcal{M}_1~,~~~ 
\frac{1}{T}\sum_{t=1}^{T}g_t((\bw_t,\bv_t))~-~
\frac{1}{T}\sum_{t=1}^{T}g_t((\bw,\bv))~\leq~\Ocal\left(\frac{bl}{\sqrt{T}}\right)~.
\]
We emphasize that this result holds 
\emph{deterministically} regardless of how the examples $(\bx_t,y_t)$ are 
sampled. Slightly rearranging and plugging back 
the definition of $g_t$, we get that
\[
\forall (\bw,\bv)\in \mathcal{M}_1~,~~~ 
\frac{1}{T}\sum_{t=1}^{T}\left(\ell(\bw_t^\top\bx_t+\bv_t^\top 
f_{\theta_t}(\bx_t);y_t)
-\ell(\bw^\top\bx_t+\bv^\top f_{\theta_t}(\bx_t);y_t)\right)
~\leq~ \Ocal\left(\frac{bl}{\sqrt{T}}\right)~.
\]	
In particular, considering any $\bw$ such that $(\bw,\mathbf{0})\in  
\mathcal{M}_1$, we get that
\begin{equation}\label{eq:ogdreg}
	\frac{1}{T}\sum_{t=1}^{T}\left(\ell(\bw_t\bx_t+\bv_t^\top 
	f_{\theta_t}(\bx_t);y_t)
	-\ell(\bw^\top\bx_t;y_t)\right)
	~\leq~ \Ocal\left(\frac{bl}{\sqrt{T}}\right)~.
\end{equation}
Again, this inequality holds deterministically. Now, note that each 
$(\bx_t,y_t)$ is a fresh sample, independent of the
history up to 
round $t$, and conditioned on this history, the expectation of 
$\ell(\bw_t^\top \bx+\bv_t^\top 
f_{\theta_t(\bx_t)};y_t)-\ell(\bw^\top\bx_t;y_t)$ (over 
$(\bx_t,y_t)$) equals $F(\bw_t,\bv_t,\theta_t)-F_{lin}(\bw)$. Therefore, 
\[
\left(F(\bw_t,\bv_t,\theta_t)-F_{lin}(\bw)\right)-\left(\ell(\bw_t^\top 
\bx+\bv_t^\top 
f_{\theta_t(\bx_t)};y_t)-\ell(\bw^\top\bx_t;y_t)\right)
\]
is a martingale difference sequence. Moreover, we assume that the losses $\ell$ 
are bounded by $r$, so by Azuma's inequality, with probability at least 
$1-\delta$, 
the average of the expression above over $t=1,\ldots,T$ is at most 
$\Ocal(r\sqrt{\log(1/\delta)/T})$. Combining this with \eqref{eq:ogdreg}, we 
get that with probability at least 
$1-\delta$,
\[
\frac{1}{T}\sum_{t=1}^{T}\left(F(\bw_t,\bv_t,\theta_t)-F_{lin}(\bw)\right)
~\leq~ 
\Ocal\left(\frac{bl}{\sqrt{T}}+r\sqrt{\frac{\log(1/\delta)}{T}}\right)~.
\]
Rearranging this, we get
\[
\frac{1}{T}\sum_{t=1}^{T}F(\bw_t,\bv_t,\theta_t)~\leq~
F_{lin}(\bw)+\Ocal\left(\frac{bl+r\sqrt{\log(1/\delta)}}{\sqrt{T}}\right)~.
\]
Since the left-hand side is an average over $T$ terms, at least one of those 
terms must be upper bounded by the right-hand side, so there exists some $t$ 
such that
\[
F(\bw_t,\bv_t,\theta_t)~\leq~
F_{lin}(\bw)+\Ocal\left(\frac{bl+r\sqrt{\log(1/\delta)}}{\sqrt{T}}\right)~.
\]
Finally, since this holds for any fixed $\bw$ such that $(\bw,\mathbf{0})\in 
\mathcal{M}_1$, we can choose the one minimizing the right hand side, from 
which the result follows. 

\subsection*{Acknowledgements}

We thank the anonymous NIPS 2018 reviewers for their helpful comments. This  
research is supported in part by European Research Council (ERC) grant  
754705.

\bibliographystyle{plainnat}
\bibliography{mybib}

\appendix

\section{Vector-Valued Networks}\label{app:vec}

In our paper, we focused on the case of neural networks with scalar-valued 
outputs, and losses over scalar values.
However, for tasks such as multi-class classification, it is common to use 
networks with vector-valued outputs $\mathbf{p}$ (representing a score for each 
possible class), and losses $\ell(\mathbf{p},\by)$ taking as input vector 
values (for example, the cross-entropy loss). Thus, it is natural to ask 
whether our 
results can be extended to such a setting.

In this setting, our objective function from \eqref{eq:obj} takes the form
\[
F(W,V,\theta) ~=~ 
\E_{\bx,\by}\left[\ell\left(W(\bx+Vf_{\theta}(\bx))\right);\by\right],
\]
where $W$ is a matrix, and as before, we define
\[
F_{\theta}(W,V) ~:=~ F(W,V,\theta)
\]
where $\theta$ is considered fixed. Using a similar proof technique, it is 
possible to prove a generalization of \thmref{thm:pl} for this case:

\begin{theorem}\label{thm:plvec}
	Suppose that $F$ is defined as above, where $\ell$ is differentiable and 
	convex in its first 
	argument. Then at any point $(W,V,\theta)$, for which $W$ has full row 
	rank and minimal singular value $s_{\min}(W)>0$, it holds that
	\[
	\norm{\nabla F_{\theta}(W,V)} ~\geq~ 
	\frac{F(W,V,\theta)-F_{lin}(W^*)}{\sqrt{2\norm{W}_{Fr}^2+\norm{W^*}_{Fr}^2\left(2+\frac{\norm{V}^2}
			{s_{\min}(W)^2}\right)}}~.
	\]
\end{theorem}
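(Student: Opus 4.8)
The plan is to follow the proof of \thmref{thm:pl} essentially line by line, replacing the rank-one ``inverse'' $\bw\mapsto\bw/\norm{\bw}^2$ by a right pseudoinverse of the matrix $W$. First I would record the two partial gradients: writing $\bd_\ell:=\nabla_{\mathbf p}\ell(\mathbf p;\by)\big\vert_{\mathbf p=W(\bx+Vf_\theta(\bx))}$ (now a vector, rather than the scalar $d_\ell$ of the scalar proof), a short computation gives
\[
\frac{\partial}{\partial W}F~=~\E_{\bx,\by}\!\left[\bd_\ell\,(\bx+Vf_\theta(\bx))^\top\right],\qquad
\frac{\partial}{\partial V}F~=~\E_{\bx,\by}\!\left[W^\top\bd_\ell\, f_\theta(\bx)^\top\right],
\]
both written in matrix form.

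Next I would establish the matrix analogue of \lemref{lem:inproduct}. Since $W$ has full row rank, $WW^\top$ is invertible; set $W^\dagger:=W^\top(WW^\top)^{-1}$, so that $WW^\dagger=I$, and define
\[
G~=~\left(W-W^*~;~W^\dagger W^* V\right).
\]
Pairing $\text{vec}(G)$ with $\nabla F_\theta(W,V)$, the $W$-block contributes $\E[\bd_\ell^\top(W-W^*)(\bx+Vf_\theta(\bx))]$, and the $V$-block, after the same cyclic-trace manipulation used in the scalar proof together with $WW^\dagger=I$, contributes $\E[\bd_\ell^\top W^* V f_\theta(\bx)]$. Summing the two blocks telescopes to $\E[\bd_\ell^\top\!\left(W(\bx+Vf_\theta(\bx))-W^*\bx\right)]$, and convexity of $\ell$ in its first argument, in the form $\bd_\ell^\top(\mathbf p-\tilde{\mathbf p})\geq\ell(\mathbf p;\by)-\ell(\tilde{\mathbf p};\by)$, lower-bounds this by $F(W,V,\theta)-F_{lin}(W^*)$.

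Finally I would invoke Cauchy--Schwarz to get $\norm{\nabla F_\theta(W,V)}\geq(F(W,V,\theta)-F_{lin}(W^*))/\norm{G}_{Fr}$ --- valid regardless of the sign of the numerator, exactly as argued in \thmref{thm:pl} --- and upper bound $\norm{G}_{Fr}^2=\norm{W-W^*}_{Fr}^2+\norm{W^\dagger W^* V}_{Fr}^2$. The first term is at most $2\norm{W}_{Fr}^2+2\norm{W^*}_{Fr}^2$. For the second, the submultiplicativity facts $\norm{AB}_{Fr}\leq\norm{A}\cdot\norm{B}_{Fr}$ and $\norm{AB}_{Fr}\leq\norm{A}_{Fr}\cdot\norm{B}$ give $\norm{W^\dagger W^* V}_{Fr}\leq\norm{W^\dagger}\cdot\norm{W^*}_{Fr}\cdot\norm{V}$, and since the nonzero singular values of $W^\dagger$ are the reciprocals of those of $W$, $\norm{W^\dagger}=1/s_{\min}(W)$. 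Collecting terms yields precisely the denominator $\sqrt{2\norm{W}_{Fr}^2+\norm{W^*}_{Fr}^2\left(2+\norm{V}^2/s_{\min}(W)^2\right)}$.

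The only genuinely new ingredients relative to the scalar case are the existence of a right inverse $W^\dagger$ with $WW^\dagger=I$ and the identity $\norm{W^\dagger}=1/s_{\min}(W)$; both hinge on $W$ having full row rank, which is exactly the hypothesis $s_{\min}(W)>0$, and this verification is the step I expect to need the most care. It is also why this argument does not extend to the $W=\mathbf{0}$ (or otherwise rank-deficient) regime, i.e. why no analogue of \thmref{thm:atzero} follows from it. Everything else is the scalar proof with scalars promoted to vectors and absolute values promoted to Frobenius norms.
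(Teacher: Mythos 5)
Your proposal is correct and follows essentially the same route as the paper: the matrix $G=\left(W-W^*~;~W^\top(WW^\top)^{-1}W^*V\right)$ you pair with $\nabla F_{\theta}(W,V)$ is exactly the paper's choice in its vector-valued key lemma, the cyclic-trace cancellation via $(WW^\top)^{-1}WW^\top=I$, the convexity lower bound, Cauchy--Schwarz, and the bounds $\norm{W-W^*}_{Fr}^2\leq 2\norm{W}_{Fr}^2+2\norm{W^*}_{Fr}^2$ and $\norm{W^\top(WW^\top)^{-1}}=1/s_{\min}(W)$ (via the SVD) all match. Your observation about why the argument breaks down for rank-deficient $W$, and hence why no analogue of the $\bw=\mathbf{0}$ analysis follows, also matches the paper's own remark.
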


\begin{proof}
	The proof proceeds in the same manner as in \thmref{thm:pl} (where $W$ was 
	a vector). We will need the following key lemma, whose proof is provided 
	below:
\begin{lemma}\label{lem:inproductvec}
	Fix some $W,V$ and a matrix $W^*$ of the same size as $W$. Define 
	the matrix
	\[
	G = 
	\left(W-W^*~;~W^\top(WW^\top)^{-1}W^*V\right)~.
	\]	
	Then
	\[
	\left\langle \text{vec}(G),\nabla 
	F_{\theta}(W,V)\right\rangle ~\geq~ 
	F(W,V,\theta)-F_{lin}(W^*)~.
	\]
\end{lemma}

By 
\lemref{lem:inproductvec} and Cauchy-Schwartz, we have
\[
\norm{G}_{Fr}\cdot \norm{\nabla F_{\theta}(W,V)}~\geq~ 
F(W,V,\theta)-F_{lin}(W^*).
\]
Dividing both sides by $\norm{G}_{Fr}$, and using the definition of $G$, we 
get 
that
\begin{equation}\label{eq:traces25vec}
\norm{\nabla F_{\theta}(W,V)} ~\geq~ 
\frac{F(W,V,\theta)-F_{lin}(W^*)}{\sqrt{\norm{W-W^*}_{Fr}^2+
		\norm{W^\top(WW^\top)^{-1}W^*V}_{Fr}^2}}
\end{equation}	
As in the proof of \thmref{thm:pl}, we can simplify this bound by upper 
bounding the denominator. 
Using the facts that $\norm{AB}\leq \norm{A}\cdot\norm{B}$, that 
$\norm{AB}_{Fr}\leq 
\norm{A}\cdot\norm{B}_{Fr}$, and that $\norm{AB}_{Fr}\leq 
\norm{A}_{Fr}\norm{B}$, this denominator is at most
\begin{equation}\label{eq:traces3vec}
\sqrt{\norm{W-W^*}_{Fr}^2+\left(\norm{W^\top (WW^\top)^{-1}}\cdot
	\norm{W^*}_{Fr}\norm{V}\right)^2}.
\end{equation}
It is easily verified that if $USV^\top$ is the SVD decomposition 
of $W$, then $W^\top(WW^\top)^{-1}$ equals $VS(SS)^{-1}U^\top$. Since $V,U$ are 
orthogonal, it follows that
$\norm{W(WW^\top)^{-1}}=\norm{S(SS)^{-1}}=s_{\min}(W)^{-1}$, where 
$s_{\min}(W)$ 
is the smallest singular value of $W$. Moreover, we have 
$\norm{W-W^*}_{Fr}^2\leq 
2\norm{W}_{Fr}^2+2\norm{W}_{Fr}$. Therefore, \eqref{eq:traces3vec} is at most
\[
\sqrt{2\norm{W}_{Fr}^2+2\norm{W^*}_{Fr}^2+\left(s_{\min}(W)^{-1}
	\norm{W^*}_{Fr}\norm{V}\right)^2}.
\]
Slightly simplifying and plugging back into \eqref{eq:traces25vec}, the 
result follows.	
\end{proof}

\begin{proof}[Proof of \lemref{lem:inproductvec}]
	To simplify notation, let $\bd_{\ell}$ denote the (vector-valued) gradient 
	of 
	$\ell$ with respect to its 
	first argument at the point 
	$W(\bx+Vf_{\theta}(\bx))$ and $\by$. For any $i,j$, it is easily verified 
	that
	\[
	\frac{\partial}{\partial W_{i,j}} F(W,V,\theta) = 
	\E_{\bx,\by}\left[(\bd_{\ell})_i
	(\bx+Vf_{\theta}(\bx))_j\right]~,
	\] 
	from which it follows that
	\[
	\frac{\partial}{\partial W} F(W,V,\theta) = 
	\E_{\bx,\by}\left[\bd_{\ell}(\bx+Vf_{\theta}(\bx))^\top\right]~,
	\]
	where we write the partial derivative in matrix form. 
	Therefore, we have
	\begin{align}
	&\left\langle \text{vec}(W-W^*),\text{vec}\left(\frac{\partial}{\partial W} 
	F(W,V,\theta)\right) \right\rangle
	= \text{trace}\left((W-W^*)^\top \frac{\partial}{\partial W} 
	F(W,V,\theta)\right)\notag\\
	&~~~~~= \E_{\bx,\by}\left[\text{trace}\left((W-W^*)^\top 
	\bd_{\ell}(\bx+Vf_{\theta}(\bx))^\top\right)\right]\notag\\
	&~~~~~= 
	\E_{\bx,\by}\left[\text{trace}\left((\bx+Vf_{\theta}(\bx))^\top(W-W^*)^\top 
	\bd_{\ell}\right)\right]\notag\\
	&~~~~~= 
	\E_{\bx,\by}\left[\bd_{\ell}^\top(W-W^*)(\bx+Vf_{\theta}(\bx))\right]~,
	\label{eq:traces1vec}
	\end{align}
	where we used the facts that $\text{trace}(ABC)=\text{trace}(CAB)$ and 
	$\text{trace}(A)=\text{trace}(A^\top)$. 
	
	Proceeding in a similar fashion, it is straightforward to verify that
	\[
	\frac{\partial}{\partial V_{i,j}} F(W,V,\theta) = 
	\E_{\bx,\by}\left[(\bd_{\ell})^\top W_{ 
		i}(f_{\theta}(\bx))_j\right]
	\]
	(where $W_{i}$ is the $i$-th column of $W$), and therefore
	\[
	\frac{\partial}{\partial V} F(W,V,\theta) = 
	\E_{\bx,\by}\left[W^\top 
	(\bd_{\ell})f_{\theta}(\bx)^\top\right].
	\]
	As a result,
	\begin{align}
	&\left\langle 
	\text{vec}(W^\top(WW^\top)^{-1}W^*V~,~\text{vec}\left(\frac{\partial}{\partial
		V} 
	F(W,V,\theta)\right) \right\rangle\notag\\
	&~~~~=
	\text{trace}\left(\left(W^\top(WW^\top)^{-1}W^*V\right)^\top 
	\frac{\partial}{\partial
		V}F(W,V,\theta)\right)\notag\\
	&~~~~=
	\E_{\bx,\by}\left[\text{trace}\left((W^*V)^{\top}(WW^\top)^{-1}WW^\top 
	(\bd_{\ell}) f_{\theta}(\bx)^\top\right)
	\right]\notag\\
	&~~~~=
	\E_{\bx,\by}\left[\text{trace}\left(f_{\theta}(\bx)^\top(W^*V)^{\top}
	(\bd_{\ell})\right)
	\right]\notag\\	
	&~~~~=\E_{\bx,\by}\left[ (\bd_{\ell})^\top W^*Vf_{\theta}(\bx)\right]~.
	\label{eq:traces2vec}
	\end{align}
	Summing \eqref{eq:traces1vec} and \eqref{eq:traces2vec}, and recalling the 
	definition of $G$, it follows that
	\begin{align*}
	&\left\langle \text{vec}(G),\nabla 
	F_{\theta}(W,V)\right\rangle~=~\notag\\
	&=~
	\left\langle \text{vec}(W-W^*),\text{vec}\left(\frac{\partial}{\partial W} 
	F(W,V,\theta)\right) \right\rangle
	+
	\left\langle 
	\text{vec}(W^\top(WW^\top)^{-1}W^*V~,~\text{vec}\left(\frac{\partial}{\partial
		V} 
	F(W,V,\theta)\right) \right\rangle\notag\\
	&=~
	\E_{\bx,\by}\left[\bd_{\ell}^\top(W-W^*)(\bx+Vf_{\theta}(\bx))\right]+
	\E_{\bx,\by}\left[ (\bd_{\ell})^\top W^*Vf_{\theta}(\bx)\right]\\
	&=~
	\E_{\bx,\by}\left[(\bd_{\ell})^\top 
	(W(\bx+Vf_{\theta}(\bx))-W^*\bx)\right]~.
	\end{align*}
	Recalling that $ \bd_{\ell} $ is the gradient of $\ell$ with 
	respect to its first argument when it equals $W(\bx+Vf_{\theta}(\bx))$, it 
	follows by convexity of $\ell$ that the expression above is lower bounded by
	\[
	\E_{\bx,\by}\left[\ell(W(\bx+Vf_{\theta}(\bx);\by))-\ell(W^*\bx;\by)\right]~=~
	F(W,V,\theta)-F_{lin}(W^*)~.
	\]
\end{proof}

This theorem already establishes that our objective in the vector-valued case 
has no stationary point $(W,V,\theta)$ (with respect to $(W,V)$) with values 
above 
$F_{lin}(W^*)$, except possibly when 
$W$ is not full row rank, or $s_{\min}(W)=0$. To analyze those cases, one would 
need an analog of \thmref{thm:atzero}, but unfortunately it is currently 
unclear how to prove such a result. We leave this question to future research.

Finally, we note that it is straightforward to derive a vector-valued version 
of \thmref{thm:online} (using stochastic gradient descent over the matrices 
$W,V$ instead of $\bw,\bv$), using a virtually identical proof.

\end{document}